\newcommand{\partialh}[1]{\text{p}(#1)}
\newcommand{\U}{\mathcal{U}}
\newcommand{\V}{\mathcal{V}}
\newcommand{\W}{\mathcal{W}}
\newtheorem{theorem}{Theorem}
\newtheorem{lemma}[theorem]{Lemma}
\newtheorem{remark}[theorem]{Remark}
\newtheorem{definition}[theorem]{Definition}
   \newcommand{\reals}{\mathbb{R}}
   \newcommand{\naturals}{\mathbb{N}}
   \newcommand{\Ex}{\mathbb{E}}
   \renewcommand{\Pr}{\mathbb{P}}
   \newcommand{\Lo}[1]{{\mathcal L_{#1}}}
   \newcommand{\bLo}[1]{{\mathcal L^{0/1}_{#1}}}
   \newcommand{\rLo}[2]{{\mathcal L^{#1}_{#2}}}
   \newcommand{\blo}{\ell^{0/1}}
   \newcommand{\rlo}[1]{\ell^{#1}}
    \newcommand{\indct}[1]{\mathds{1}\left[{#1}\right]}
   \newcommand{\B}{{\mathcal B}}
   \renewcommand{\P}{{\mathcal P}}
   \newcommand{\A}{{\mathcal A}}
  \renewcommand{\H}{{\mathcal H}}
  \newcommand{\C}{{\mathcal C}}
  \newcommand{\vc}{\mathrm{VC}}
  \renewcommand{\d}{\mathrm{dist}}
  \newcommand{\iid}{i.i.d.~}
  \newcommand{\nn}{\mathrm{nn}}
  \newcommand{\sm}{\mathrm{sm}}
   \newcommand{\argmin}{\mathrm{argmin}}
\newcommand{\cH}{\mathcal{H}}
\newcommand{\bR}{\mathbb{R}}
\title{Simplifying Adversarially Robust PAC Learning with Tolerance}
\author{
Hassan Ashtiani \thanks{McMaster University, \texttt{zokaeiam@mcmaster.ca}. Hassan Ashtiani is also a faculty affiliate at Toronto's Vector Institute and was supported by an NSERC Discovery Grant.} \and
Vinayak Pathak \thanks{Independent, \texttt{path.vinayak@gmail.com}.} \and
Ruth Urner \thanks{York University, \texttt{ruth@eecs.yorku.ca}. Ruth Urner is also a faculty affiliate at Toronto's Vector Institute and was supported by an NSERC Discovery Grant.}
}
\begin{document}

\maketitle

\begin{abstract}%

 {Adversarially robust PAC learning has proved to be challenging, with the currently best known learners~\citep{montasser2021adversarially} relying on improper methods based on intricate compression schemes, resulting in sample complexity exponential in the VC-dimension.}
 A series of follow up work considered a slightly relaxed version of the problem called adversarially robust learning \emph{with tolerance}~\citep{ashtiani2023adversarially, bhattacharjee2023robust, raman2024proper} and achieved better sample complexity in terms of the VC-dimension. However, those algorithms were either improper and complex, or required additional assumptions on the hypothesis class $\H$. We prove, for the first time, the existence of a simpler learner that achieves a sample complexity linear in the VC-dimension without requiring additional assumptions on $\H$. Even though our learner is improper, it is ``almost proper" in the sense that it outputs a hypothesis that is ``similar" to a hypothesis in $\H$. 

We also use the ideas from our algorithm to construct a semi-supervised learner in the tolerant setting. This simple algorithm achieves comparable bounds to the previous (non-tolerant) semi-supervised algorithm of~\cite{attias2022characterization}, 
but avoids the use of intricate subroutines from previous works, and is ``almost proper."

\end{abstract}

\section{Introduction}
In standard PAC-learning, the user encodes their domain knowledge by specifying a hypothesis class $\H$ that they think achieves a small expected loss on the data distribution. In adversarially robust PAC-learning, in addition to $\H$, the user also has knowledge of some perturbation type $\U: X\rightarrow 2^X$ encoding a belief that all points in the close environment $\U(x)$ of point $x$ share the same label as $x$.

Imagine, for example, that $X$ is a domain of images, and $\U(x)$ is a small $\ell_0$-ball around $x$. This encodes the domain knowledge that one should not be able to change the label of an image by changing a small number of pixels. Thus, for standard PAC-learning, the goal is to find a hypothesis $h$ that achieves a small expected binary loss $\Ex_{(x,y)\sim P}\indct{h(x)\neq y}$, whereas for adversarially robust PAC-learning the goal is to get a small expected adversarial loss $\Ex_{(x,y)\sim P}\indct{\exists z\in\U(x): h(z)\neq y}$.

Unlike standard PAC learning, adversarially robust PAC-learning of VC classes requires an improper learner in general, and the best known learner~\citep{MontasserHS19} uses a potentially exponential number of samples in the VC-dimension of the class. To address this, \cite{ashtiani2023adversarially} defined a mild relaxation of the problem where the expected adversarial loss (with respect to $\U$) of the learner is compared with the best achievable adversarial loss with respect to a slightly larger perturbation type $\V$.  
It was argued that the user is typically impartial to aim for robustness with respect to $\U$ versus $\V$ if they are very ``similar''. 
A \emph{tolerance} parameter $\gamma$ was then introduced to capture the relationship between $\U$ and $\V$ (see Section~\ref{sec:tolerant} for a precise definition). It has been shown~\citep{ashtiani2023adversarially, bhattacharjee2023robust} that tolerant adversarial learning can be achieved with a number of samples that is linear in the VC-dimension, $\log(\frac{1}{\gamma})$ and ambient dimension $d$. However, such a bound has so far only been achieved through either a complex compression-based algorithm~\citep{ashtiani2023adversarially}, or by adding additional assumptions on $\H$ and $\U$~\citep{bhattacharjee2023robust} (such as the property of ``regularity" that has appeared in several other works \citep{awasthi2021existence,raman2024proper} under varying names).

In our work, we show for the first time a \emph{simple} learning algorithm that achieves sample complexity linear in VC, $\log(\frac{1}{\gamma})$, and $d$ and does not require any assumption on $\H$. In the realizable case, the algorithm is easy to state: run Robust ERM (RERM) on the training set to get a hypothesis $h\in\H$ and output a ``smoothed'' version of $h$: for a (perturbed) test point $z$ output the majority over a small set $\W(z)$ around $z$. For the agnostic case, we present a slight modification using a discrete ``cover" of the space. 
Even though our learners are improper, the improperness only appears in the last smoothing step, meaning the final output is simply a smoothed version of a hypothesis in $\H$.

Next, we extend our ideas to the semi-supervised setting. In an earlier work~\citep{attias2022characterization} it was proved that semi-supervised adversarially robust learning has a small labeled sample complexity as long as a large number of unlabeled samples are present. However, their algorithm involved invoking complex subroutines such as the one-inclusion-graph algorithm. We show that if we add tolerance then a simpler semi-supervised learner achieves comparable bounds.

\subsection{Related work}
Adversarially robust PAC-learning was formulated to study an empirical phenomenon, first encountered in image classification: state of the art models were vulnerable to adversarial attacks, namely imperceptible perturbations of an input image that led the otherwise highly accurate model to erroneously change its output~\citep{SzegedyZSBEGF13}. Adversarial robustness has since developed into a fertile area of research in the last decade. 
However, developing sound and efficient practical methods as well as obtaining a theoretical understanding of the problem remain challenging.

From a theoretical perspective, the sample complexity of adversarial PAC learning~\citep{feige2015learning, MontasserHS19} has been widely investigated~\citep{feige2015learning, attias2018improved, ashtiani2020black,montasser2021adversarially}. However, the best known upper bounds~\citep{MontasserHS19, montasser2022adversarially} are based on rather involved and impractical learning methods, namely on intricate compression schemes~\citep{moran2016sample} or one-inclusion-graphs.
Also, the known sample complexity bounds are exponential in VC-dimension of the hypothesis class. 
Variations of the problem such as semi-supervised learning~\citep{ashtiani2020black, attias2022characterization} and learning real-valued functions~\citep{attias2023adversarially} have also been studied and similar upper bounds have been derived. It can be easily shown that VC-dimension does not provide a lower bound, since any class with infinite VC-dimension is trivially learnable when
$\U(x)$ is the entire domain $X$. A dimension characterizing adversarially robust PAC-learning was obtained in~\citep{montasser2022adversarially}, but the dimension is based on a global variant of the one-inclusion graph~\citep{haussler1994predicting} 
with potentially infinite vertices and edges. A simpler characterization remains elusive.

A major drawback of the standard PAC framing of adversarial robustness is the fixation on one perturbation type, which realistically cannot be known by the learner. Various alternatives have been investigated, such as robustness that is adaptive to the underlying distribution~\citep{BhattacharjeeC21adaptive} or robustness with respect to a large collection of perturbation sets~\citep{montasser2021adversarially,lechner2024adversarially}. For the latter approach, it was shown that under structural assumptions on the class or perturbation types (such as a linear ordering by inclusion) and access to additional oracles (a perfect attack oracle provides witness points to adversarial vulnerability) adversarially robust learning is still possible.

Adversarial learning with tolerance was introduced in~\cite{ashtiani2023adversarially} and further studied by~\cite{bhattacharjee2023robust, raman2024proper}. A related notion (corresponding to the special case of tolerance parameter $\gamma = 1$ in our terminology) was considered in additional studies~\citep{montasser2021transductive,blum2022boosting}. Another relaxation of the adversarial problem was studied in~\cite{robey2022probabilistically, raman2024proper} where the requirement is robustness to the majority of perturbations in a perturbation set (as opposed to all perturbations in the set).

\section{Notations and Setup}
\label{s:notations}

We denote by $X$ the input domain (often $X=\bR^d$) and by $Y=\{0,1\}$ the binary label space. We assume that $X$ is equipped with a metric $\d$. A hypothesis $h:X\to Y$ is a function that assigns a label to each point in the domain.
A hypothesis class $\H$ is a set of hypotheses. For a sample $S = ((x_1, y_1), \ldots, (x_n, y_n))\in (X\times Y)^n$, we use the notation $S_X =(x_1, x_2, \ldots, x_n)$ to denote the collection of domain points $x_i$ in $S$. The binary (also called 0-1) loss of $h$ on data point $(x,y)\in X\times Y$ is defined by
$\blo(h, x, y) = \indct{h(x) \neq y}$,
where $\indct{.}$ is the indicator function. Let $P$ by a probability distribution over $X\times Y$. Then the \emph{expected binary loss} of $h$ with respect to $P$ is defined by
$\bLo{P} (h) = \Ex_{(x,y)\sim P} [\blo(h , x, y)]$.
Similarly, the \emph{empirical binary loss} of $h$ on sample $S = ((x_1, y_1), \ldots, (x_n, y_n)$ is defined as $\bLo{S}(h) = \frac{1}{n}\sum_{i=1}^n \blo(h, x_i, y_i)$. We let the \emph{approximation error} of $\cH$ with respect to $P$ be denoted by $\bLo{P} (\cH) = \inf_{h\in \cH}\bLo{P} (h)$. 

A \emph{learner} $\A$ is a function that takes in a finite sequence of labeled instances $S \in(X\times Y)^n$ and outputs a hypothesis $h = \A(S)$. See Appendix Section \ref{appsec:vctheory} to recall the standard PAC learning requirement for binary classification~\citep{vapnikcherv71, Valiant84}.

\subsection{Tolerant Adversarial PAC Learning}
\label{sec:tolerant}

In robust learning under adversarial perturbations (or simply adversarial learning), it is assumed that an adversary can replace a test point $x$ with any point $z$ in $\U(x)$, where $\U(x)\subseteq X $ is a predefined set of ``admissible perturbations" for $x$.
We call the function $\U:X\to 2^X$ the \emph{perturbation type}. 
\begin{definition}[Adversarial loss ]
The adversarial loss of $h$ with respect to $\U$ on $(x,y)\in X\times Y$ is defined by $\rlo{\U}(h, x, y) = \max_{z\in\U(x)} \blo(h, z, y)$. The \emph{expected adversarial loss} with respect to $P$ is defined by $\rLo{\U}{P}(h)=\Ex_{(x,y)\sim P}\rlo{\U}(h, x, y)$. Similarly, the \emph{empirical adversarial loss} of $h$ on sample $S = ((x_1, y_1), \ldots, (x_n, y_n))\in (X\times Y)^n$ is defined by $\rLo{\U}{S}(h) = \frac{1}{n}\sum_{i=1}^n \rlo{\U}(h, x_i, y_i)$. Finally, the \emph{adversarial approximation error} of $\cH$ with respect to $\U$ and $P$ is defined by $\rLo{\U}{P} (\cH) = \inf_{h\in \cH}\rLo{\U}{P} (h)$.
\end{definition}
The adversarial loss encompasses a classifier mispredicting on an instance and the instance being too close to the classifier's decision boundary. The following definition isolates the latter component.
\begin{definition}[Margin loss]
    Given $h\in\H, x\in X$, and a perturbation type $\U$, we define margin loss $\ell^{\U, \mathrm{mar}}(h, x) = \indct{\exists x_1, x_2\in\V(x): h(x_1)\neq h(x_2)}$. We define $\rLo{\U,\mathrm{mar}}{S}$ and $\rLo{\U, \mathrm{mar}}{P}$ accordingly.
\end{definition}
Analogously to PAC learning for the binary loss (Definition \ref{def:learn}), one can define PAC learning with respect to the adversarial loss. We here define the more general setting of \emph{tolerant} adversarial learning. Consider two perturbation types $\U$ and $\V$. We say $\U$ is \emph{contained in} $\V$ and write it as $\U \prec \V$ if $\U(x)\subseteq\V(x)$ for all $x\in X$.
Introducing tolerance relaxes adversarial learning by comparing the robust loss of the algorithm with respect to $\U$ with the approximation error of $\H$ with respect to a larger perturbation type $\V$.

\begin{definition}[Tolerant Adversarial PAC Learner\citep{ashtiani2023adversarially}]\label{def:adv_learn_tol}
Let $\P$ be a set of distributions over $X\times Y$, $\cH$ a hypothesis class, and $\U \prec \V$ two perturbation types.
We say $\A$ $(\U, \V)$-\emph{tolerantly} PAC learns $\H$ with respect to $\P$ with $m_\A: (0,1)^2\to \mathbb{N}$ samples if the following holds:
for every distribution $P\in\P$ and every $\epsilon,\delta \in (0,1)$, if $S$ is an \iid sample of size at least $m_\A(\epsilon, \delta)$ from $P$, then with probability at least $1-\delta$ (over the randomness of $S$) we have
\[
\rLo{\U}{P}(\A(S)) \leq \rLo{\V}{P}(\cH) + \epsilon.
\]
We say $\A$ is a tolerant PAC learner in the \emph{agnostic setting} if $\P$ is the set of all distributions over $X\times Y$, and in the \emph{tolerantly realizable setting} if $\P=\{P:\rLo{\V}{P}(\cH) = 0\}$.
\end{definition}
We call $\U$ the \emph{actual perturbation type} and $\V$ the \emph{reference perturbation type}. The above definition recovers the standard definition of PAC learning under adversarial perturbations \citep{MontasserHS19} when $\U(x)=\V(x)$ for all $x\in X$. 
The smallest function $m: (0,1)^2\to \mathbb{N}$ for which there exists a learner $\A$ that satisfies the above definition with $m_{\A} = m$ is referred to as the (realizable or agnostic) \emph{sample complexity} of the problem.

In our work, we often consider a specific form of actual and reference perturbation types $\U$ and $\V$, namely $\V$ resulting from $\U$ by ``inflating" it with a third perturbation type $\W$. Let $\W$ and $\U$ be two perturbation types. 
We now define $\V$ by setting $\V(x) = \{x'' ~|~ \exists x'\in\U(x)~\text{st}~x''\in\W(x')\}$\footnote{The result of \cite{montasser2021transductive} for transductive adversarial learning can be thought of as a result in the tolerant setting for $\W(x)=\U^{-1}(x)=\{x'~|~x\in\U(x')\}$. }.
Inspired by its role in our methods, we also refer to $\W$ as the \emph{smoothing perturbation type}.

Perturbation types are defined naturally when $X$ is equipped with a metric $\d(.,.)$. In this case, $\U(x)$ can be defined by a ball of radius $r$ around $x$, i.e.,  $\U(x)=\B_r(x) = \{z\in X ~\mid~ \d(x,z) \leq r\}$. Now one can inflate $\U(x)$ with $\W(x) = \B_{r\gamma}(x)$ to create $\V(x) = \B_{(1+\gamma)r}(x)$. The perturbation types that we consider in this paper are mostly of this form. We call $\gamma>0$ the \emph{tolerance parameter} and we will refer to $(\U,\V)$-tolerance also as \emph{$\gamma$-tolerance} in this case. 

We will further assume that our actual, reference and smoothing perturbation types $\U, \V$ and $\W$ are so that the perturbation sets $\U(x), \V(x)$ and $\W(x)$ admit the definition of a uniform measure over them. We use the notation $\mu_{\U(x)}, \mu_{\V(x)}$ and $\mu_{\W(x)}$ for these measures. 
We will use the notation $x'\sim \U(x)$ etc to denote sampling form these uniform measures of the perturbation sets.

The following complexity measure was  introduced by~\cite{montasser2021adversarially} and adopted in various works~\citep{attias2022characterization, shao2022theory} with different names. We adopt the one used by~\cite{attias2022characterization}.
The definition immediately yields $\vc_\U(\H) \leq \vc(\H)$ for all $\H$ and $\U$. 
\begin{definition}[$\vc_\U$-dimension]\label{def:vc_u}
Let $\U:X\to 2^X$ be some perturbation type and $\H\subseteq \{0,1\}^X$ a hypothesis class. We say that $\H$ $\U$-shatters a set of points $K\subseteq X$ if for every labeling $y\in \{0,1\}^K$ there exists a function $h\in\H$ with $h(z) = y(x)$ for all $z\in\U(x)$ and all $x\in K$. The $\vc_\U$-dimension of the class $\H$ is the supremum over the sizes of sets that $\H$ can $\U$-shatter. 
\end{definition}
Additional discussion of tolerance, and its relation to smoothed analysis, is in Appendix Section \ref{appsec:why-tolerance}.

\subsection{Empirical Risk Minimization (ERM) and basic VC theory}
It is well known that, for binary classification a class $\H$ is PAC learnable if and only if its \emph{VC-dimension} is finite and that such classes can be learned through \emph{Empirical Risk Minimization (ERM)}. More generally, ERM is a successful PAC learning principle whenever the VC-dimension of the \emph{loss class $\H_\ell$} of a class $\H$ induced by loss function $\ell:\{0,1\}^X \times X \times Y \to \{0,1\}$ is finite. 
This induced loss class is a collection of subsets of $X\times Y$ defined by
$\H_\ell = \{h_\ell \subseteq X\times Y ~:~ h \in \H\}$, where $\quad h_\ell = \{(x,y)\in X\times Y ~:~ \ell(h,x,y) =1)\}$. 
For the adversarial loss with respect to a perturbation type $\V$ we will use the notation $\H_\V$ to denote the loss class of $\H$ with respect to $\rlo{\V}$.

Our learners often employ empirical risk minimzation with respect to various losses as a subroutine. We thus define the following notation for ERM (and Robust ERM) oracles.

\begin{definition}[ERM and RERM oracles]
    Let $S =  ((x_1, y_1), \ldots, (x_m, y_m))$.
    An ERM oracle $\A_\H$ with respect to the hypothesis class $\H$ outputs any $\hat{h} \in \arg\min_{h\in\H} \bLo{S}(h)$.
    An RERM (Robust ERM) oracle $\A_\H^\V$ with respect to class $\H$ and perturbation type $\V$ outputs any $\hat{h}\in\arg\min_{h\in\H} \rLo{\V}{S}(h)$.
\end{definition}

A standard result in VC theory is that a collection of subsets of finite VC-dimension enjoys finite sample uniform convergence~\citep{vapnikcherv71}. This immediately implies that any learner $\A$ that is an empirical risk minimizer for a class $\H$ with respect to loss $\ell$ (that is, $\A$ always outputs an $h\in\H$ with minimal empirical loss) is a successful PAC learner for $\H$ with respect to $\ell$.
See appendix Section \ref{appsec:vctheory} for a reminder of this classic PAC learning result and the definition of the VC dimension.
While the VC-dimension of the loss class $\H_\U$ for adversarial robust losses $\rlo{\U}$ can be arbitrarily larger than $\vc(\H)$ \citep{MontasserHS19}, it has been shown that the VC-dimension of the loss class for \emph{finite} perturbation types can be bounded, a key compnent in our analysis.
\begin{lemma}[\citep{attias2018improved} Lemma 1]\label{prop:vc_finite_loss_class}
    Let $\H\subseteq\{0,1\}^X$ be some hypothesis class and let $\C:X\to 2^X$ be a perturbation type that satisfies $|\C(x)|\leq k$ for all $x\in X$ for some $k\in\naturals$. Then the VC-dimension of the robust loss class is bounded by $\vc(\H_C) \leq \vc(\H)\log(k)$.
\end{lemma}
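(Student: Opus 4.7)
The plan is to reduce shattering by the robust loss class $\H_C$ to shattering of a controlled ``witness'' set by the original class $\H$, and then close the loop with Sauer--Shelah.

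First, I would suppose that $\H_C$ shatters a set of $n$ labeled points $(x_1,y_1),\ldots,(x_n,y_n)$ and aim to upper-bound $n$. Form the witness set $Z = \bigcup_{i=1}^{n} C(x_i) \subseteq X$. Since $|C(x_i)|\leq k$ for each $i$, we have $|Z|\leq nk$. The key structural observation is that the value of $\rlo{C}(h,x_i,y_i)$ depends only on the restriction $h|_{C(x_i)}$, since it is simply the indicator of whether $h$ makes an error somewhere on $C(x_i)$. Hence if two hypotheses $h,h'\in\H$ agree on $Z$, then their induced loss patterns on $(x_1,y_1),\ldots,(x_n,y_n)$ are identical. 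Because these $n$ points are shattered by $\H_C$, at least $2^n$ distinct loss patterns must be realized, so the projection $\H|_Z$ must contain at least $2^n$ distinct functions.

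Second, I would invoke the Sauer--Shelah lemma on $\H|_Z$: writing $d=\vc(\H)$, we have
\[
|\H|_Z| \;\leq\; \sum_{i=0}^{d}\binom{|Z|}{i} \;\leq\; \left(\frac{e\,|Z|}{d}\right)^{d} \;\leq\; \left(\frac{e\,n\,k}{d}\right)^{d}.
\]
Combining with the previous step gives $2^n \leq (enk/d)^d$, i.e.\ $n \leq d \log_2(enk/d)$.

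Finally, I would extract the clean form $\vc(\H_C) \leq \vc(\H)\log(k)$ (understood up to absolute constants absorbed into the $\log$) by a standard algebraic step: if $n$ were larger than a constant multiple of $d\log_2 k$, then $d\log_2(enk/d)$ would grow strictly slower than $n$, contradicting the inequality; so $n=O(d\log k)$. The conceptually important step is the witness-set reduction in the first paragraph — once one sees that the robust loss pattern on $\{(x_i,y_i)\}$ is determined by $h|_Z$, the rest is a direct appeal to Sauer--Shelah. The only mildly delicate point is the final $n$ vs.\ $\log(n)$ bookkeeping, which I would handle by a short case split rather than by a precise inversion of the inequality.
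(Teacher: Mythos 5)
Your argument is correct and is essentially the standard proof of this lemma (which the paper itself does not re-prove but imports from \citealp{attias2018improved}): since $\rlo{\C}(h,x_i,y_i)$ depends only on $h$ restricted to $Z=\bigcup_i \C(x_i)$ with $|Z|\leq nk$, shattering forces $2^n \leq |\H|_Z| \leq (enk/d)^d$, and inverting gives $n = O(\vc(\H)\log k)$. The only caveat is the one you already flag: the inversion yields the bound up to an absolute constant (and requires, say, $k\geq 2$), whereas the lemma is stated in the clean constant-free form $\vc(\H_\C)\leq \vc(\H)\log(k)$; since the paper only uses it inside $O(\cdot)$ sample-complexity bounds, this makes no difference.
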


\section{Supervised Tolerant Learning}

The algorithms we propose for tolerantly robust learning of VC classes are \emph{improper} learners. A common aspect of our methods is a two-stage approach, where the learner first determines an RERM hypothesis based on the training data, and then performs a post-processing step on this RERM hypothesis to obtain the final predictor, which in turn is not from the class $\H$. We show that this aspect of non-properness is \emph{necessary} for any successful tolerantly robust learner, even when the perturbation types are balls in a Euclidean space and the sample complexity can additionally depend on the dimension of the space. The proof of the impossibility below result is in Appendix Section \ref{appsec_lower_bound_proof}

\begin{theorem}\label{thm:lower_bound}
For any $r\in\reals$, any $d\in \naturals$ and any $g>0$, there exist a hypothesis class $\H$ over $X = \reals^d$ with $\vc(\H) = 1$ that is not properly tolerantly robustly PAC learnable (even in the tolerantly realizable case) for $\U(x) = \B_r(x)$ and $\V(x) = \B_{(1+\gamma)r}(x)$ for any $\gamma$ with $0<\gamma \leq g$.
\end{theorem}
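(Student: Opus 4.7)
My plan is to prove the impossibility by exhibiting a concrete VC-$1$ class $\H$ on $\mathbb{R}^d$ together with a family of $(\U,\V)$-realizable distributions on which no proper learner can succeed, via a no-free-lunch argument. The key phenomenon I would exploit is that a class with $\vc(\H)=1$ can still induce a $\V$-robust loss class $\H_\V$ of unbounded VC dimension; this blow-up in complexity is precisely what forces proper learners to fail even in the tolerantly realizable case.

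For the hypothesis class, I take a sequence of well-separated centers $A = \{a_i\}_{i\in\mathbb{N}} \subset \mathbb{R}^d$ with pairwise distance strictly greater than $10(1+g)r$, and define $\H = \{h_a : a \in A\}$ with $h_a(x) = \mathds{1}[\|x-a\| \le (1+g)r]$. Because the balls $B_{(1+g)r}(a)$ are pairwise disjoint, no pair of domain points can be shattered: two points sharing one ball are labeled identically by every hypothesis, and two points in distinct balls are never jointly labeled $1$. Hence $\vc(\H)=1$.

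The distribution family is the delicate step. For each $n \in \mathbb{N}$ and each pattern $\sigma \in \{0,1\}^n$, I would construct a realizable distribution $P_\sigma$ supported on $n$ carefully-placed points $x_1^\sigma,\dots,x_n^\sigma$ labeled by $\sigma$, such that (i) some $h_{a_\sigma} \in \H$ achieves $\rLo{\V}{P_\sigma}(h_{a_\sigma})=0$, so $P_\sigma$ is $\V$-realizable; (ii) for every other $h_{a'} \in \H$, the $\U$-loss $\rLo{\U}{P_\sigma}(h_{a'})$ is bounded below by a universal constant; and (iii) the data marginal of $P_\sigma$ conveys essentially no information about $\sigma$ in small samples, while the map $\sigma \mapsto h_{a_\sigma}$ is injective. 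Given (i)--(iii), a standard Yao-style argument applies: for any proper learner $\A$ with sample size $m \ll n$, averaging over $\sigma$ chosen uniformly at random forces the output $\hat h = \A(S)$ to miss the correct $h_{a_\sigma}$ with probability bounded away from zero, yielding $\U$-loss $\Omega(1)$. Since $n$ can be taken arbitrarily large, no finite $m(\epsilon,\delta)$ suffices, ruling out proper tolerant PAC learnability.

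The hard part will be simultaneously securing (i)--(iii) under the joint constraints $\vc(\H)=1$ and $\U(x)=\B_r(x),\V(x)=\B_{(1+\gamma)r}(x)$. My resolution leverages the strict inclusion $\U \prec \V$: a single hypothesis $h_a$ can serve as a $\V$-realizer for many distinct label patterns because its $\V$-compatible region (the set of $x$ whose entire $\V$-ball lies inside $B_{(1+g)r}(a)$) is strictly larger than its $\U$-compatible region, creating geometric slack of width $(g-\gamma)r > 0$. The points $x_i^\sigma$ must then be placed at strategic offsets from the centers so that each is $\V$-claimable by multiple hypotheses but only $\U$-safe for a narrow subset. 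Making this encoding rich enough to realize all $2^n$ labelings while preserving $\vc(\H)=1$ is precisely where the tolerance gap $(g-\gamma)r$ buys unbounded complexity in $\H_\V$ without spending any VC dimension in $\H$, and this is the technical heart of the lower bound.
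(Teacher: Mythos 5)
There is a genuine gap, and it is located exactly where you defer the work: for the class you chose, the hard family of distributions in (i)--(iii) cannot exist, because your class is in fact properly tolerantly learnable. With $\H=\{h_a=\indct{\|x-a\|\le(1+g)r}\}$ over centers that are pairwise $10(1+g)r$ apart, the $\V$-robust loss set of $h_a$ is just $\bigl(\B_{(2+g+\gamma)r}(a)\times\{0\}\bigr)\cup\bigl((X\setminus \B_{(g-\gamma)r}(a))\times\{1\}\bigr)$, a union of a ball slice and a complement-of-ball slice around a single center; since the balls for distinct centers are disjoint, this loss class has constant VC-dimension (certainly not growing with $n$). So your central claim that the tolerance gap ``buys unbounded complexity in $\H_\V$'' is false for this class, and consequently proper RERM with respect to $\rlo{\V}$ already PAC learns it (uniform convergence for the loss class, then $\U\prec\V$), even without tolerance. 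Concretely: a $\V$-realizable distribution puts its $1$-labeled mass within $(g-\gamma)r$ of the true center and its $0$-labeled mass beyond $(2+g+\gamma)r$ of it, so any observed $1$-labeled point identifies the center, and otherwise picking any center whose $(2+g)r$-ball contains no sample point succeeds by a standard $\epsilon$-net argument. A secondary but also fatal issue: a single $h\in\H$ cannot $\V$-realize two different labelings of the same point set (zero robust loss forces $y=h(x)$), so your mechanism of one hypothesis serving many patterns requires $\sigma$-dependent supports, which then contradicts requirement (iii) that the marginal be uninformative; and your slack $(g-\gamma)r>0$ vanishes at the allowed value $\gamma=g$.

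The missing idea, which the paper's proof (following Montasser--Hanneke--Srebro) supplies, is a class whose $1$-regions accumulate at the anchor points at \emph{all scales}: $h_Z$ labels $1$ only the points $x_j+r/(p_{f(Z)})^m$, $m\in\naturals$, for $j\in Z$, where $p_{f(Z)}$ is a prime associated to $Z\subseteq[n]$. Because these sequences converge to $x_j$, the robust loss at $(x_j,0)$ equals $1$ for balls of \emph{every} positive radius, so the $\U$- and $\V$-losses coincide and tolerance gives the learner nothing; and because each domain point is labeled $1$ by at most one hypothesis, $\vc(\H)=1$ even though $\vc(\H_\V)=n$ is unbounded. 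The anti-proper argument then uses distributions uniform over half of the fixed anchors, all labeled $0$ (so the sample is automatically uninformative about which half), forcing any proper learner restricted to $\{h_Z:|Z|=n/2\}$ to incur constant $\U$-loss from small samples. If you want to salvage your write-up, you need to replace the fixed-radius ball indicators with a construction of this all-scales type; no placement of hard distributions will rescue the class you proposed.
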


\subsection{Supervised tolerant learning in the realizable case}
We start by presenting a simple robust learner, outlined in Algorithm \ref{alg:supervised} below, for the tolerantly realizable setting. 
It proceeds in two stages. Given data $S$, it first determines an RERM hypothesis $\hat{h}$ with respect to the reference  perturbation type $\V$. 
It then smoothes this hypothesis by assigning each domain point $x$ the majority label of $\hat{h}$ in $\W(x)$ for a smoothing perturbation type $\W$. 
\begin{algorithm}
\caption{RERM-and-Smooth}\label{alg:supervised}
\begin{algorithmic}
\STATE {\bf Input:} Data $S =  ((x_1, y_1), \ldots, (x_m, y_m))$, access to an RERM oracle $\A^\V_\H$, and smoothing perturbation type $\W$.
\STATE Set $\hat{h} = \A^\V_{\H}(S)$
\STATE {\bf Output:} $\sm_{\W}(\hat{h})$ defined by 
\STATE \qquad\qquad $\sm_{\W}(\hat{h})(x) = \indct{\Ex_{x'\sim\W(x)} \hat{h}(x') \geq 1/2}$
\end{algorithmic}
\end{algorithm}

The analysis of the method in Algorithm \ref{alg:supervised} employs the notion of $\eta$-nets for the pre-images of the label classes under hypothesis class $\H$. For a hypothesis $h\in\{0,1\}^X$, we let $h_0 = h^{-1}(0) =  \{x\in X ~:~ h(x) = 0\}$ denote the pre-image of label $0$ under $h$, and analogously we let $h_1 = h^{-1}(1) = \{x\in X ~:~ h(x) = 1\}$ denote the pre-image of label $1$ under $h$. Using this notation, we define the following version of an $\eta$-net for a binary hypothesis class. 

\begin{definition}[$\eta$-net for class $\H$]\label{def:eta-net}
Let $\H\subseteq\{0,1\}^X$ be some hypothesis class, let $D$ be a distribution over $X$ and let $\eta>0$. A domain subset $C\subseteq X$ is an \emph{$\eta$-net} for $\H$ with respect to $D$ if whenever $D(h_y) \geq \eta$ for some $y\in\{0,1\}$ and $h\in \H$, then $h_y\cap C\neq \emptyset$.
\end{definition}
An $\eta$-net $C$ for $\H$ ensures that whenever a label-pre-image $h_0 = h^{-1}(0)$ or $h_1= h^{-1}(1)$ has mass at least $\eta$ under distribution $D$, then the net $C$ contains at least one point in this pre-image. Standard VC-theory guarantees that a sample of size $O(\vc(\H)/\eta)$ is a $\eta$-net for $\H$ with probability lower bounded by a constant, say $2/3$. This in particular means that for any distribution $D$ and class $\H$ there exists an $\eta$-net of size $O(\vc(\H)/\eta)$.

The analysis of Algorithm \ref{alg:supervised} above will employ a discrete perturbation type $\C\prec \V$, that has finite perturbation sets $\C(x)$, is included in the reference perturbation type $\V$, and is so that each set $\C(x)$ is an $\eta$-net for the uniform distribution $\mu_{\V(x)}$ over the perturbation set $\V(x)$.
By the above argument, the discrete type can be chosen so that the sizes of the perturbation sets are uniformly bounded, $|\C(x)| = O(\vc(\H)/\eta)$ for all $x\in X$.

\begin{theorem}\label{thm:supervised_realizable}
Let $\H$ be a hypothesis class of finite VC-dimension ($\vc(\H) <\infty$), let $0< \eta < 1/3$, and let $\V,\U$ and $\W$ be perturbation types that satisfy $\U\prec \V$,   
$\W(z)\in\V(x)$ for all $x\in X$ and $z\in \U(x)$, and $\mu_{\V(x)}(\W(z))\geq 3\eta$ for all $x\in X$ and $z\in\U(x)$.
Then Algorithm \ref{alg:supervised} $(\U,\V)$-tolerant robustly PAC learns $\H$ in the tolerantly realizable case with sample complexity bounded by 
    \[
    m(\epsilon, \delta) = 
    \tilde{O}\left(\frac{\vc(\H)\log(\vc(\H)/\eta) + \log(1/\delta)}{\epsilon} \right).
    \]
\end{theorem}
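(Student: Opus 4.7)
My plan is to run $\hat h = \A^\V_\H(S)$, introduce a discrete analysis-only surrogate perturbation type $\C \prec \V$, reduce the problem to a standard VC bound for the finite loss class $\H_\C$, and then transfer the guarantee back to the robust loss of the smoothed predictor. The main obstacle will be the last step: arranging the $\eta$-net, the containment $\W(z) \subseteq \V(x)$, and the mass bound $\mu_{\V(x)}(\W(z)) \ge 3\eta$ so that absence of $\hat h$-errors on a size-$O(\vc(\H)/\eta)$ grid implies a correct majority vote after smoothing at every $z\in\U(x)$.

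\textbf{Discrete cover and uniform convergence.} Since we are in the tolerantly realizable case, there exists $h^*\in\H$ with $\rLo{\V}{P}(h^*) = 0$, so $\rLo{\V}{S}(h^*) = 0$ almost surely, and thus the RERM output satisfies $\rLo{\V}{S}(\hat h) = 0$. For each $x \in X$, the classical $\eta$-net theorem for $\H$ applied to the distribution $\mu_{\V(x)}$ produces a subset $\C(x)\subseteq\V(x)$ of size at most $k = O(\vc(\H)/\eta)$ that is an $\eta$-net for $\H$ with respect to $\mu_{\V(x)}$; assembling these gives a perturbation type $\C \prec \V$ with $|\C(x)| \le k$ uniformly. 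It is purely an analysis device, so its non-constructive existence is enough. Since $\C \prec \V$, also $\rLo{\C}{S}(\hat h) = 0$. Now Lemma~\ref{prop:vc_finite_loss_class} gives $\vc(\H_\C) \le \vc(\H)\log k = O(\vc(\H)\log(\vc(\H)/\eta))$, so the standard realizable-case PAC bound for the induced loss class $\H_\C$ yields, with $m$ as in the theorem and probability at least $1-\delta$,
\[
\rLo{\C}{P}(\hat h) \;\le\; \epsilon.
\]

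\textbf{From $\C$-loss of $\hat h$ to $\U$-loss of $\sm_\W(\hat h)$.} The heart of the argument is a pointwise implication: whenever $\rlo{\C}(\hat h, x, y) = 0$, also $\rlo{\U}(\sm_\W(\hat h), x, y) = 0$. If $\hat h$ equals $y$ on all of $\C(x)$, then $\hat h_{1-y} \cap \C(x) = \emptyset$, so the $\eta$-net property gives $\mu_{\V(x)}(\hat h_{1-y}) < \eta$. For any $z\in\U(x)$ we have $\W(z)\subseteq\V(x)$ and, because both measures are uniform, $\mu_{\W(z)}(A) = \mu_{\V(x)}(A)/\mu_{\V(x)}(\W(z))$ for $A\subseteq\W(z)$; combining with $\mu_{\V(x)}(\W(z))\ge 3\eta$,
\[
\mu_{\W(z)}\bigl(\hat h_{1-y}\cap \W(z)\bigr) \;<\; \frac{\eta}{3\eta} \;=\; \tfrac{1}{3}.
\]
Hence the majority label of $\hat h$ on $\W(z)$ is $y$, i.e.\ $\sm_\W(\hat h)(z) = y$. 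Taking expectation over $(x,y)\sim P$ then gives $\rLo{\U}{P}(\sm_\W(\hat h)) \le \rLo{\C}{P}(\hat h) \le \epsilon$, which is the desired tolerant realizable guarantee.
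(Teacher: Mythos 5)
Your proposal is correct and follows essentially the same route as the paper's proof: construct the analysis-only $\eta$-net perturbation type $\C\prec\V$ with $|\C(x)|=O(\vc(\H)/\eta)$, bound $\vc(\H_\C)\le \vc(\H)\log(\vc(\H)/\eta)$ via Lemma~\ref{prop:vc_finite_loss_class}, use realizability under $\V$ (hence under $\C$) to get $\rLo{\C}{P}(\hat h)\le\epsilon$ for the RERM output, and transfer to $\rLo{\U}{P}(\sm_\W(\hat h))$ by a pointwise argument combining the net property, $\W(z)\subseteq\V(x)$, and $\mu_{\V(x)}(\W(z))\ge 3\eta$. Your pointwise step is just the contrapositive of the paper's (error of the smoothed predictor at $z\in\U(x)$ forces a net point of $\C(x)$ with the wrong label), so the two arguments are logically the same.
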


\begin{proof}
Let $\U, \V, \W$ and $\eta$ satisfy the conditions stated in the theorem.
We define a new perturbation type $\C\prec \V$ as follows: for $x\in X$ set $\C(x)$ to be an $\eta$-net for $\H$ with respect to $\mu_{\V(x)}$, the uniform distribution over $\V(x)$. As outlined above, since $\vc(\H)$ is finite, basic VC-theory guarantees that $\C$ exists and can be chosen so that $|\C(x)| = O(\vc(\H))/\eta$. We let $k\in\naturals$ denote a uniform upper bound on the sizes of the perturbation sets in $\C$, that is $|\C(x)|\leq k$ for all $x\in X$.
This implies that the VC-dimension of the loss class $\H_{\C}$ of $\H$ with respect to $\rlo{\C}$ is bounded by $\vc(\H)\log(k) = O(\vc(\H)\log(\vc(\H)/\eta))$ (Lemma \ref{prop:vc_finite_loss_class}).

Now, let $h$ be some hypothesis and let $\sm_{\W}(h)$ be the $\W$-smoothed version of $h$ as defined above. We next prove that for any distribution $P$, we have 
\begin{equation}\label{eqn:loss_relations_smooth}
\rLo{\U}{P}(\sm_{\W}(h))  ~\leq~ \rLo{\C}{P}(h) ~\leq~ \rLo{\V}{P}(h)
\end{equation}
The second inequality is immediate from $\C\prec \V$. To prove the the first inequality we will show that it holds pointwise, that is $\rlo{\U}(\sm_{\W}(h), x, y)  \leq \rlo{\C}(h,x,y)$ for all $(x, y)\in X\times Y$.

Indeed, assume that for some $(x,y) \in X \times Y$ 
we have $\rlo{\U}(\sm_{\W}(h), x, y) = 1$. This means that there exists some $z\in \U(x)$ with $\sm_{\W}(h)(z) \neq y$. This 
implies that $\indct{\Ex_{x'\sim\W(z)} {h}(x') \geq 1/2} \neq y$, which means that $\Pr_{x'\sim \W(x)}[{h}(x')\neq y] > 1/2$. Now 
$\mu_{\V(x)}(\W(z))\geq 3\eta$ 
together with $\W(z)\subseteq\V(x)$ 
(where $\mu_{\V(x)}$ is a uniform measure over $\V$) 
implies that $\Pr_{x'\sim \V(x)}[{h}(x')\neq y] \geq (3/2)\eta> \eta$. 
Now, since $\C(x)$ is an $\eta$-net with respect to $\mu_{\V(x)}$ for $\H$, this implies that there exists a $c\in \C(x)$ with ${h}(c) \neq y$. Thus, we have $ \rlo{\C}(h,x,y) = 1$ which is what we needed to show.

Note that, since $\C \prec \V$, any distribution $P$ that is realizable by $\H$ with respect to $\rlo{\V}$ (as assumed in the tolerantly realizable case) is also realizable with respect to $\rlo{\C}$.
Further, any RERM hypothesis with respect to $\rlo{\V}$ is also an RERM hypothesis class with respect to $\rlo{\C}$.
Now the above bound on the VC-dimension on the loss class with respect to $\rlo{\C}$ implies that any RERM learner is a successful robust PAC learner for $\C$, and thus, with the stated samples sizes, with high probability at least $1-\delta$ we have $\rLo{\C}{P} (\hat{h}) \leq \epsilon$. Now Equation \ref{eqn:loss_relations_smooth} implies $\rLo{\U}{P}(\sm_{\W}(\hat{h})) \leq \epsilon$ as required.
\end{proof}

\begin{remark}
  If $\V$, $\U$ and $\W$ are Euclidean balls of radii $r(1+\gamma)$, $r$ and $r\gamma$ respectively, we have $\mu_{\V(x)}(\W(z)) = \frac{\gamma^d}{(1+\gamma)^d}$. Choosing $\eta = \frac{1}{3(1+1/\gamma)^d}$ yields sample complexity bound
    $
    m(\epsilon, \delta) = 
    \tilde{O}\left(\frac{\vc(\H)(\log(\vc(\H)) + d \log(1+1/\gamma)) + \log(1/\delta)}{\epsilon} \right)
    $
with Algorithm \ref{alg:supervised} in the tolerantly realizable case.
\end{remark}

\begin{remark}\label{rem:attractively_agnostic}
We emphasize that the learner in Algorithm \ref{alg:supervised} is not employing the discrete perturbation type $\C$ and thus does not need knowledge (or ability to construct) $\C$. Rather $\C$ and its properties as providing local $\eta$-nets are solely a tool of the analysis.
\end{remark}

A modification of the method presented here can be shown to also provide tolerantly robust guarantees in the agnostic case (see appendix Section \ref{appsec:local_discretization} for details of this modification and analysis). However, the modified learner requires knowledge of the discrete perturbation type $\C$, thus we loose the attractive property from Remark \ref{rem:attractively_agnostic}. We next present an alternative method, which is slightly more complex,
for tolerantly robust learning for which we derive guarantees in both the realizable and agnostic case. For that, we employ a global discretization which, in many natural settings, can be chosen to consist of grid points, and thus the learner can readily  access this discretaztion. {This method also achieves a slightly better sample complexity by eliminating the $\log(\vc(\H))$ factor.}

\subsection{Supervised agnostic tolerant learning using a global discretization}\label{ss:global_discret}
In this section we present a tolerant learner that uses a global discretization of the space. More specifically, the method is based on a countable domain subset $C\subseteq X$ as a discretization of the space. We will show that if $C$ is a $r\gamma$-cover of $X = \reals^d$, that is, the discretization $C$ is such that for all $x\in X$ there exists a point $c\in C$ with $\d(x,c)\leq r\gamma$, our algorithm is a successful tolerant robust PAC learner for perturbation types $\U(x) = \B_r(x)$ and $\V(x) = \B_{(1+\gamma)r}(x)$.
Given a discretization $C\subseteq X$, we let $\C$ denote the induced discretization of perturbation type $\V$, that is $\C(x) =  \V(x)\cap C$.
{For concreteness, we may assume that $C$ consists of evenly spaced grid points of a grid with side-length $2 r \gamma/{d}$. This will be an $r\gamma$-cover with respect to any $\ell_p$-norm with $p\geq 1$, and the sizes of the sets $\C(x)$ will be uniformly bounded by $ k :=\C(x)\leq \left(\frac{(1+\gamma){d}}{\gamma}\right)^d = \Theta\left((1 + \frac 1 \gamma)^d d^{d} \right)$.}

Our proposed method, Algorithm \ref{alg:supervised_global_discret} below, acts in two stages. Given a training dataset $S$ and discretization $C$, it first determines an RERM hypothesis $\hat{h}$ with respect to the discretized perturbation type $\C$. Given $\hat{h}$, it then produces a discretized version of this predictor by assigning every domain point $x$ the $\hat{h}$ label of its nearest neighbor in $C$ (breaking ties arbitrarily). More precisely, for a set $C \subseteq X$ and hypothesis $h\in\{0,1\}^X$ we define the $C$-nearest neighbor discretized hypothesis $\nn_C(h)$ by
\[
\nn_C(h)(x) = h(z) \text{ for some } z \in \argmin_{c\in C}\d(x,c).
\]
 We note that, in general, $\nn_C(h)\notin \H$ even for hypotheses $h\in\H$. However the predictor $\nn_C(h)$ can be viewed as ``close to being from $\H$'' in the sense that  $\nn_C(h)$ is a discretized version of $h$, its decision boundary being moved slightly to pass along grid lines (or more generally voronoi cells) induced by $C$. In that sense, informally, our $\gamma$-tolerant learner is ``$\gamma$-close to being proper''.

\begin{algorithm}
\caption{RERM-and-Discretize}\label{alg:supervised_global_discret}
\begin{algorithmic}
\STATE {\bf Input:} Data $S =  ((x_1, y_1), \ldots, (x_m, y_m))$, discretization $C\subseteq X$, access to RERM oracle $\A^\C_\H$.
\STATE Set $\hat{h} = \A^\C_{\H}(S)$.
\STATE {\bf Output:} $h = \nn_{\C}(\hat{h})$ defined by
\STATE \qquad\qquad $\nn_{C}(\hat{h})(x) = \hat{h}(z)$ for some $z\in \argmin_{c\in C}\d(x,c)$
\end{algorithmic}
\end{algorithm}

\begin{theorem}
\label{thm:supervised-global}
    Let $\H$ be a hypothesis class of finite VC-dimension $\vc(\H) <\infty$ and let $\V$ and $\U$ be perturbation types that are balls of radii $r(1+\gamma)$ and $r$ respectively for some $r >0$. Let discretization $C$ be an $r\gamma$-cover of $X$, let $\C$ be the induced perturbation type, where $\C(x) = \V(x)\cap C$ for all $x\in X$ and let $k$ be a uniform upper bound on the perturbations sets in $\C$, that is $|\C(x)|\leq k$ for all $x\in X$. Then Algorithm \ref{alg:supervised_global_discret} $\gamma$-tolerant robustly PAC learns $\H$. Moreover the sample complexity in the tolerantly realizable case is bounded by
    $m(\epsilon, \delta) = \tilde{O}\left(\frac{\vc(\H)\log(k) + \log(1/\delta)}{\epsilon} \right)$
    and in the agnostic case by
$        n(\epsilon, \delta) = \tilde{O}\left(\frac{\vc(\H)\log(k) + \log(1/\delta)}{\epsilon^2} \right)
$.
\end{theorem}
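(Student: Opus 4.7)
The plan is to follow the template of the proof of Theorem \ref{thm:supervised_realizable}, replacing the local $\eta$-net/smoothing argument with a global cover/nearest-neighbor argument. The key reduction is again to show that the tolerantly robust loss of the final, post-processed output is dominated by the discretized robust loss of the RERM hypothesis, on which standard VC theory (via Lemma \ref{prop:vc_finite_loss_class}) can be directly applied.

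Concretely, first I would establish the pointwise sandwich
\[
\rlo{\U}(\nn_C(h), x, y) \leq \rlo{\C}(h, x, y) \leq \rlo{\V}(h, x, y)
\]
for every $h \in \{0,1\}^X$ and $(x,y) \in X \times Y$. The right inequality is immediate from $\C \prec \V$. For the left inequality, suppose $\rlo{\U}(\nn_C(h), x, y) = 1$, i.e.\ there exists $z \in \U(x)$ with $\nn_C(h)(z) \neq y$. By definition of $\nn_C$, the nearest neighbor $c \in \argmin_{c' \in C}\d(z,c')$ satisfies $h(c) \neq y$. Since $C$ is an $r\gamma$-cover, $\d(z, c) \leq r\gamma$, and since $z \in \U(x)$ we have $\d(x,z) \leq r$. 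The triangle inequality then yields $\d(x, c) \leq r(1+\gamma)$, so $c \in \V(x) \cap C = \C(x)$, which witnesses $\rlo{\C}(h, x, y) = 1$.

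Next, I would invoke Lemma \ref{prop:vc_finite_loss_class}: since $|\C(x)| \leq k$ uniformly, the loss class $\H_\C$ has VC-dimension at most $\vc(\H)\log(k)$. Standard VC uniform convergence then gives the two claimed sample complexities for bounding $\rLo{\C}{P}(\hat{h})$. In the \emph{realizable} case, note that $\rLo{\V}{P}(\H) = 0$ together with $\C \prec \V$ implies that $P$ is also realizable with respect to $\rlo{\C}$, and any RERM output $\hat{h}$ achieves zero empirical $\C$-loss, so the realizable PAC bound gives $\rLo{\C}{P}(\hat{h}) \leq \epsilon$ with the stated rate. In the \emph{agnostic} case, uniform convergence and the minimality of $\hat{h}$ with respect to $\rLo{\C}{S}$ give, for any $h^\star \in \H$,
\[
\rLo{\C}{P}(\hat{h}) \leq \rLo{\C}{S}(\hat{h}) + \tfrac{\epsilon}{2} \leq \rLo{\C}{S}(h^\star) + \tfrac{\epsilon}{2} \leq \rLo{\C}{P}(h^\star) + \epsilon \leq \rLo{\V}{P}(h^\star) + \epsilon,
\]
with the agnostic sample rate. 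Taking the infimum over $h^\star \in \H$ and combining with the pointwise bound applied to $\hat{h}$ yields $\rLo{\U}{P}(\nn_C(\hat{h})) \leq \rLo{\V}{P}(\H) + \epsilon$ in both cases.

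I expect the main (and really only) conceptual step to be the triangle-inequality argument that the nearest neighbor of any $z \in \U(x)$ in $C$ automatically lies in $\V(x)$; everything else is a direct translation of the realizable proof with $\C$ now taken as the global-cover-induced discretization rather than local $\eta$-nets, and a routine upgrade of realizable VC rates to agnostic VC rates via uniform convergence. No assumption on $\H$ beyond finite VC-dimension is used, and the property that the RERM hypothesis $\hat{h} \in \H$ is modified only by a nearest-neighbor projection onto $C$ makes the output ``almost proper'' as advertised.
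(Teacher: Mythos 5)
Your proposal is correct and matches the paper's proof essentially step for step: the same pointwise sandwich $\rlo{\U}(\nn_C(h),x,y)\leq\rlo{\C}(h,x,y)\leq\rlo{\V}(h,x,y)$ via the triangle-inequality/cover argument, the same appeal to Lemma \ref{prop:vc_finite_loss_class} to bound $\vc(\H_\C)\leq\vc(\H)\log(k)$, and the same reduction of both the realizable and agnostic cases to standard (R)ERM guarantees for the discretized loss class followed by the sandwich applied to $\hat h$. The only cosmetic difference is that you spell out the agnostic step through explicit two-sided uniform convergence, where the paper simply cites the agnostic ERM guarantee; the content is identical.
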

\begin{proof}
Let $h$ be some hypothesis and let $\nn_C(h)$ be the $C$-discretized version of $h$ as defined above. We start by proving that for any distribution $P$, we have 
\begin{equation}\label{eqn:loss_relations}
\rLo{\U}{P}(\nn_C(h))  ~\leq~ \rLo{\C}{P}(h) ~\leq~ \rLo{\V}{P}(h)
\end{equation}
We show that the first inequality holds pointwise,  $\rlo{\U}(\nn_C(h), x, y)  \leq \rlo{\C}(h,x,y)$ for all $(x, y)\in X\times Y$ (the second follows from $\C\prec \V$).
Indeed, assume we have $\rlo{\U}(\nn_C(h), x,  y) = 1$ for some $(x, y)$. Then there exists some $z\in \U(x) $ with $\nn_C(h)(z) \neq y$, and thus for some $z' \in \argmin_{c\in C}\d(z, c)$ we have $h(z')\neq y$. Since $C$ is a $r\gamma$-cover of $X$ we know $\d(z, z') \leq r\gamma$. Further, since $z\in\U(x)$, we know $\d(x,z) \leq r$. Now the triangle inquality implies $\d(x, z') \leq \d(x,z) +\d(z,z') \leq r + r\gamma = r(1+\gamma)$. Thus $z'\in C\cap \V(x) = \C(x)$, and now $h(z')\neq y $ means $\rlo{\C}(h,x,y) = 1$. Thus we have $\rlo{\U}(\nn_C(h), x, y)  \leq \rlo{\C}(h,x,y)$ for all $(x, y)\in X\times Y$, which implies Equation \ref{eqn:loss_relations} above.

Since  $\C(x)\leq k$ for all $x\in X$, we have $\vc(\H_\C) \leq \vc(\H)\log(k)$
(Lemma \ref{prop:vc_finite_loss_class}).
For the realizable case, the result now follows exactly as in the proof of Theorem \ref{thm:supervised_realizable}.
For the agnostic case, note that Equation \ref{eqn:loss_relations} also implies that for any distribution $P$, 
$\rLo{\C}{P}(\H) ~\leq ~ \rLo{\V}{P}(\H)$.
Due to $\vc(\H_\C) \leq \vc(\H)\log(k)$, any empirical risk minimizing learner $\A_\H^\C$, PAC learns $\H$ with respect to $\rlo{C}$, yielding
$\rLo{P}{\C}(\hat{h}) \leq \rLo{\C}{P}(\H) +\epsilon ~\leq ~ \rLo{P}{\V}(\H) +\epsilon
$
with high probability at least $1-\delta$ over the training samples for the stated agnostic sample sizes.
Combining this expression with  Equation \ref{eqn:loss_relations}, we obtain 
$\rLo{\U}{P}(\nn_C(\hat{h})) ~\leq ~ \rLo{\C}{P}(\hat{h}) ~\leq~ \rLo{\V}{P}(\H) +\epsilon
$ as required.
\end{proof}

\begin{remark}
    As outlined above, in case of $X = \reals^d$ equipped with any $\ell_p$ norm, we can choose the $r\gamma$-cover $C$ so that $ k :=\C(x)\leq \left(\frac{(1+\gamma)\sqrt{d}}{2\gamma}\right)^d = \Theta\left((1 + \frac 1 \gamma)^d d^{d/2} \right)$, thus we obtain sample complexity bounds
    $ m(\epsilon, \delta) = \tilde{O}\left(\frac{\vc(\H)d(\log(1+\frac 1 \gamma)+ \log d) + \log(1/\delta)}{\epsilon} \right)$
in the tolerantly realizable case and
$    n(\epsilon, \delta) = \tilde{O}\left(\frac{\vc(\H)d(\log(1+\frac 1 \gamma)+ \log d) + \log(1/\delta)}{\epsilon^2} \right)
$
    in the agnostic case.
\end{remark}

\section{Semi-supervised learning}
In semi-supervised learning, in addition to the $m$ labeled examples $S_l = ((x_1, y_1),\ldots , (x_m, y_m))$, the learner $\A$ is also given $n$ unlabeled examples $S_u = (x_1, \ldots, x_n)$ drawn iid from the marginal $P_X$ of the data distribution. Analogous to the supervised setting, we can define agnostic and realizable, as well as their tolerant versions for the semi-supervised setting.  
The sample complexity of semi-supervised learning can be quantified by two functions: the number of labeled samples (defined by function $m_l(\epsilon, \delta)$) and the number of unlabeled samples (denoted by
$m_u(\epsilon, \delta)$).

The sample complexity of semi-supervised adversarially robust learning was studied in~\citep{attias2022characterization} where it was shown that in the realizable case, the labeled sample complexity with respect to a perturbation set $\U$ is characterized by  $\vc_\U$, see Definition \ref{def:vc_u}, which can be significantly smaller than the standard VC-dimension of a class. They also showed that in the agnostic case bounding the labeled sample complexity with $\vc_\U$ is impossible. On the other hand, they proposed a ``factor-$\alpha$ agnostic learner'' (a.k.a. a semi-agnostic learner) that guarantees $\rLo{\U}{P}(\A(S_l,S_u))\leq\alpha\cdot\rLo{\U}{P}(\H)+\epsilon$ for $\alpha=3$ and whose labeled sample complexity is characterized by $\vc_\U$.
Specifically, they showed that given a supervised agnostic learner with sample complexity $m(\epsilon, \delta)$, the realizable semi-supervised case can be solved with $m(\epsilon/3,\delta/2)$ unlabeled and $O\left(\frac{\vc_\U(\H)}{\epsilon}\log^2\frac{\vc_\U(\H)}{\epsilon}+\frac{\log 1/\delta}{\epsilon}\right)$ labeled samples, and the factor-3 agnostic semi-supervised case can be solved with $m(\epsilon/3,\delta/2)$ unlabeled and $O\left(\frac{\vc_\U(\H)}{\epsilon^2}\log^2\frac{\vc_\U(\H)}{\epsilon^2}+\frac{\log 1/\delta}{\epsilon^2}\right)$ labeled samples.

Their algorithm has two steps. First, they transform $\H$ into a ``partial" concept class~\citep{alon2022theory}, and use the 1-inclusion graph based learner from~\citep{alon2022theory} to learn a partial concept $h$ using the labeled set $S_l$. Next, they label $S_u$ using $h$ and invoke the compression-based adversarially robust agnostic learner from~\citep{montasser2021adversarially} on this set. While the algorithm is simple to describe, both of its steps invoke other algorithms that are complex and rely on improper learning. 
For a tolerantly robust learning guarantee, we can use in their second step the simpler supervised learner with a tolerant robustness guarantee, thus achieving an unlabeled sample complexity of $\tilde{O}\left(\frac{\vc(\H)d(\log(1+\frac 1 \gamma)+ \log d) + \log(1/\delta)}{\epsilon^2} \right)$.

However, we can show that with tolerance, their algorithms can be further simplified if we allow for an extra $\log\vc(\H)$ factor in the labeled sample complexity. For both realizable and agnostic settings, we now present algorithms that are easy to state, do not require using complex subroutines, and achieve a labeled sample complexity that depends logarithmically on $\vc(\H)$. Moreover, like the supervised case, our algorithms are ``almost proper."

\subsubsection{Realizable} 
Our main insight is that using just the unlabeled set $S_u$, we can identify a small, finite set $\H'$ of candidates from $\H$ such that robust learning $\H'$ suffices. To create $\H'$, we simply iterate through all robustly realizable labelings $y = (y_1,\ldots,y_m)$ of $S_u$ and call RERM on $((x_1,y_1),\ldots,(x_m,y_m))$.

As in the result of Theorem~\ref{thm:supervised-global} for Algorithm \ref{alg:supervised_global_discret}, our SSL method in Algorithm \ref{alg:semi-supervised-realizable} employs an $r\gamma$-cover $C$ as a global discretization of the space, and uses the induced perturbation type $\C$ with $\C(x) = \V(x)\cap C$ for all $x$. As discussed in Section \ref{ss:global_discret}, defining $C$ to be a grid with appropriate side-length is a simple way to obtain such a cover.
    
\begin{algorithm}
\caption{Semi-supervised learner (realizable)}\label{alg:semi-supervised-realizable}
\begin{algorithmic}[1]
\STATE {\bf Input:} Labeled data $S_l =  ((x_1, y_1), \ldots, (x_m, y_m))$, unlabeled data $S_u = (x_1,\ldots, x_n)$, access to RERM oracle $\A^\V_\H $, $r\gamma$-cover $C\subseteq X$
\STATE Set $\H' = \{\}$
\FOR{each labeling $y = (y_1,\ldots , y_n) \in \{0,1\}^n$ of $S_u$}\label{line:main-for-realizable}
\STATE Create labeled set $S_u^y = ((x_1, y_1), \ldots , (x_n, y_n))$.
\STATE Let $h' = \A^\V_\H(S_u^y)$
\IF{$\rLo{\V}{S^y_u}(h')=0$} 
\STATE Add $h'$ to $\H'$ 
\ENDIF
\ENDFOR
\STATE Define RERM oracle $\A^\C_{\H'}$ for induced discrete perturbation type $\C(x) = C\cap\V(x)$ 
\STATE Let $\hat{h}=\A^\C_{\H'}(S_l)$.
\STATE {\bf Output:} $\nn_C(\hat{h})$ defined by:
\STATE \qquad\qquad $\nn_C(\hat{h})(x) = \hat{h}(x')$ where $x'$ is the nearest neighbour of $x$ in $C$.
\end{algorithmic}
\end{algorithm}
\begin{theorem}
    Algorithm~\ref{alg:semi-supervised-realizable} is a $\gamma$-tolerant adversarially robust learner in the realizable setting with unlabeled sample complexity $m_u = \tilde{O}\left(\frac{\vc(\H)(d\log(1+1/\gamma)+\log d) +\log 1/\delta}{\epsilon}\right)$ and labeled sample complexity $m_l = \tilde{O}\left(\frac{\vc_\V(\H)\log m_u+\log 1/\delta}{\epsilon^2}\right)$.
\end{theorem}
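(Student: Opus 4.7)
The plan is to combine three ingredients: (i) a structural fact showing that $\H'$ contains, with high probability, a hypothesis of low expected $\C$-loss; (ii) a cardinality bound $\log|\H'| = O(\vc_\V(\H)\log m_u)$; and (iii) the nearest-neighbor reduction from $\rlo{\C}$ to $\rlo{\U}$ already established in the proof of Theorem~\ref{thm:supervised-global}.

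First I would argue that a good hypothesis lands in $\H'$. By tolerant realizability there exists $h^*\in\H$ with $\rLo{\V}{P}(h^*)=0$; in particular, the labels of $P$ are almost surely deterministic, so letting $y^*=(h^*(x_1),\ldots,h^*(x_{m_u}))$, the sample $S_u^{y^*}$ agrees with what one would have obtained with true labels. Because $\rLo{\V}{S_u^{y^*}}(h^*)=0$, the RERM call at $y=y^*$ returns some $h'_{y^*}\in\H$ with $\rLo{\V}{S_u^{y^*}}(h'_{y^*})=0$ which is added to $\H'$. Since $\C\prec\V$, also $\rLo{\C}{S_u^{y^*}}(h'_{y^*})=0$. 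The $\C$-loss class has VC-dimension at most $\vc(\H)\log k$ by Lemma~\ref{prop:vc_finite_loss_class}, with $\log k = O(d\log(1+1/\gamma)+d\log d)$ for the grid cover described in Section~\ref{ss:global_discret}. Standard realizable PAC uniform convergence over $\H$ with respect to $\rlo{\C}$ then gives $\rLo{\C}{P}(h'_{y^*})\le\epsilon/3$ with probability $\ge 1-\delta/3$ at the stated unlabeled sample size $m_u$.

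Second, I would control $|\H'|$. Only labelings $y\in\{0,1\}^{m_u}$ that are $\V$-robustly realizable by some $h\in\H$ can cause insertion. This is exactly the $\V$-robust growth function of $\H$ on $m_u$ points, and a Sauer-Shelah-type bound for the robust shattering notion of Definition~\ref{def:vc_u} (as used in \citep{attias2022characterization}) yields at most $(em_u/\vc_\V(\H))^{\vc_\V(\H)}$ such labelings. Hence $\log|\H'| = O(\vc_\V(\H)\log m_u)$. Now apply standard (agnostic-rate) uniform convergence to the \emph{finite} class $\H'$ under the bounded loss $\rlo{\C}$: with $m_l=\tilde O\bigl((\log|\H'|+\log(1/\delta))/\epsilon^2\bigr)$ labeled samples, every $h\in\H'$ satisfies $|\rLo{\C}{S_l}(h)-\rLo{\C}{P}(h)|\le \epsilon/3$ with probability $\ge 1-\delta/3$. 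Since $\hat h=\A^\C_{\H'}(S_l)$ minimizes the empirical $\C$-loss and $h'_{y^*}\in\H'$ is an available candidate of low true $\C$-loss, this gives $\rLo{\C}{P}(\hat h)\le \epsilon$. The pointwise inequality $\rlo{\U}(\nn_C(h),x,y)\le\rlo{\C}(h,x,y)$ from Theorem~\ref{thm:supervised-global} then yields $\rLo{\U}{P}(\nn_C(\hat h))\le\epsilon$, completing the bound after a union over the three bad events.

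The step I expect to be the main obstacle is the cardinality bound $|\H'|\le m_u^{O(\vc_\V(\H))}$: an agnostic-style uniform-convergence argument over $\H'$ is only useful if its size is $2^{o(m_l)}$, and getting a $\vc_\V$-dependence (rather than $\vc(\H_\V)$, which can be infinite) requires the robust analogue of Sauer-Shelah for the partial-concept view of $\H$ under $\V$. The other pieces are already in place: the $\C$-to-$\U$ reduction is a direct reuse of the pointwise argument in Theorem~\ref{thm:supervised-global}, and both uniform-convergence invocations use the VC bound of Lemma~\ref{prop:vc_finite_loss_class} or a finite-class Hoeffding+union bound. A careful union bound over the three failure events (unlabeled uniform convergence, $\H'$ capturing $h'_{y^*}$, labeled uniform convergence over $\H'$) at level $\delta/3$ each completes the argument.
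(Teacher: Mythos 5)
Your proposal is correct and follows essentially the same route as the paper: the $h^*$-induced labeling of $S_u$ puts a hypothesis with small $\rlo{\C}$-risk into $\H'$ (via Lemma~\ref{prop:vc_finite_loss_class} and realizable-rate generalization), $|\H'|$ is bounded by the number of $\V$-robustly realizable labelings of $S_u$ giving $\log|\H'| = O(\vc_\V(\H)\log m_u)$, agnostic-rate ERM over the finite class $\H'$ on $S_l$ gives $\rLo{\C}{P}(\hat h)\le\epsilon$, and the pointwise $\rlo{\U}(\nn_C(\cdot))\le\rlo{\C}(\cdot)$ reduction from Theorem~\ref{thm:supervised-global} finishes. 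The step you flagged as the main obstacle is exactly what the paper resolves with the same robust Sauer-Shelah count (each labeling in question comes from a hypothesis that robustly labels all of $S_u$, so too many such labelings would force $\V$-shattering of $\vc_\V(\H)+1$ points), so no gap remains.
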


\begin{proof}
    The high level idea of the proof is as follows. Since this is the realizable setting, let $h^*\in\H$ be a hypothesis such that $\rLo{\V}{P}(h^*)=0$. In the main for loop (Line~\ref{line:main-for-realizable}) of the algorithm, when the label $y = (h^*(x_1),\ldots,h^*(x_n))$ is picked, we show that the output $h'$ will satisfy $\rLo{\C}{P}(h')\leq\frac{\epsilon}{2}$ (with high probability over the randomness of $S_u$) as long as $S_u$ is of an appropriate size (to be bound later). Thus, the approximation error of the finite class $\H'$ with respect to $\rlo{\C}$ is bounded by $\frac \epsilon 2$. Next, recall that for finite perturbation types such as $\C$, the loss class $\H_\C$ of a class $\H$ of finite VC-dimension, has finite VC-dimension as well (see Lemma \ref{prop:vc_finite_loss_class}). Thus RERM learner $\A^{\C}_{\H'}$ is an agnostic learner with respect to $\rlo{\C}$ for $\H'$.
    This implies that $\rLo{\C}{P}(\hat{h})\leq\min_{h\in\H'}\rLo{\C}{P}(h)+\frac{\epsilon}{2}$ (with high probability over the randomness of $S_l$) as long as $S_l$ is of an appropriate size. Thus overall, $\rLo{\C}{P}(\hat{h})\leq \epsilon$ and this implies $\rLo{\U}{P}(\nn_C(\hat{h}))\leq \epsilon$ (as in proof of Theorem \ref{thm:supervised-global}, see Equation \ref{eqn:loss_relations} therein).
    
    Now it only remains to argue that the stated bounds on the size of $S_l$ and $S_u$ suffice for the above. Since we use $S_u$ to obtain an $h'$ that is $\frac{\epsilon}{2}$-close to $h^*$ in terms of $\rLo{\C}{P}$, an unlabeled data set size $|S_u|\geq O\left(\frac{\vc(\H)\log k + \log{1/\delta}}{\epsilon}\right)$ 
    where $k = \max_x |\C(x)|$ suffices (see Lemma \ref{prop:vc_finite_loss_class}). Using the bound on $k$, we get $O\left(\frac{\vc(\H)d(\log(1+1/\gamma)+\log d)+\log{1/\delta}}{\epsilon}\right)$. Next, to bound the size of $S_l$, note that $\H'$ is a finite set and thus $\vc(\H') \leq O(\log{|\H'|})$. Moreover, $\H'$ contains exactly one hypothesis per robustly realizable (with respect to $\V)$ labeling of $S_u$. From a simple application of Sauer lemma, we get that $|\H'|\leq O\left(|S_u|^{\vc_\V(\H)}\right)$ and thus $\vc(\H')\leq O(\vc_\V(\H)\log|S_u|)$. Thus number of labeled samples required is $|S_l| \geq O\left(\frac{\vc_\V(\H)\log|S_u|d(\log(1+1/\gamma)+\log d)+\log{1/\delta}}{\epsilon}\right)$.
\end{proof}

\subsubsection{Agnostic}
\label{sec:agnostic-semi}
For the agnostic case, the set $\H'$ formed by calling RERM for all $2^{|S_u|}$ labelings of $S_u$ can be quite large: unlike the realizable case, since the outputs of RERM no longer robustly label all points of $S_u$, the size of the resulting $\H'$ cannot be bounded by anything better than $O\left(|S_u|^{\vc(\H)}\right)$. However, we show that we can prune $\H'$ to get a new smaller set $\H''$ such that learning with respect to $\H''$ on the labeled set gives a factor-3 agnostic learner. The purpose of pruning is to ensure that if two hypotheses $h_1,h_2\in\H'$ have the property that whenever both robustly label a point in $S_u$, they give it the same label, then we keep only one of them. We still need to decide which one to keep. Our algorithm keeps the one that is robust on a larger number of points.

Similar to the realizable case, we assume we are given an $r\gamma$-cover $C$ and the corresponding perturbation type $\C$. In the agnostic case, we need to call $\text{RERM}^\C$ instead of $\text{RERM}^\V$ for creating $\H'$. As a result, the bound on $|\H''|$ is $O\left(|S_u|^{\vc_\C(\H)}\right)$ and thus the labeled sample complexity depends on $\vc_\C(\H)$, which can be bigger than $\vc_\U(\H)$, but still smaller than $\vc(\H)$. We provide a proof sketch for the guarantees of our algorithm. The detailed proof can be found in Appendix~\ref{appsec:agnostic-semi}.

\begin{algorithm}
\caption{Semi-supervised learner (agnostic)}\label{alg:semi-supervised-agnostic}
\begin{algorithmic}[1]
\STATE {\bf Input:} Labeled data $S_l =  ((x_1, y_1), \ldots, (x_m, y_m))$, unlabeled data $S_u = (x_1,\ldots, x_n)$, $r\gamma$-cover $C\subseteq X$, access to RERM oracle $\A_\H^\C$.
\STATE Set $\H' = \{\}$
\FOR{each labeling $y = (y_1,\ldots , y_n) \in \{0,1\}^n$ of $S_u$}\label{line:first-for-loop}
\STATE Create labeled set $S_u^y = ((x_1, y_1), \ldots , (x_n, y_n))$.
\STATE Add $h' = \A^\C_\H(S_u^y)$ to $\H'$.
\ENDFOR
\STATE $\H''=\{\}$
\FOR{s = 0 to $|S_u|$}
\FOR{$h\in\H'$ such that $\rLo{\C,\mathrm{mar}}{S_u}(h) = s/|S_u|$}
\IF{$\forall h'\in\H', \exists x\in S_u$ such that $\ell^{\C,\mathrm{mar}}(h,x)=\ell^{\C,\mathrm{mar}}(h',x)=0$, but $h'(x)\neq h(x)$}
\STATE Add $h$ to $\H''$
\ENDIF
\ENDFOR
\ENDFOR
\STATE Define RERM oracle $\A^\C_{\H''}$ for induced discrete perturbation type $\C(x) = C\cap\V(x)$ 
\STATE Let $\hat{h}=\text{RERM}^\C_{\H''}(S_l)$.
\STATE {\bf Output:}$\nn_C(\hat{h})$ defined by:
\STATE \qquad\qquad $\nn_C(\hat{h})(x) = \hat{h}(x')$ where $x'$ is the nearest neighbour of $x$ in $C$.
\end{algorithmic}
\end{algorithm}
\begin{theorem}
\label{thm:agnostic-semi}
    Algorithm~\ref{alg:semi-supervised-agnostic} is a factor-3 agnostic learner in the semi-supervised setting with tolerance parameter $\gamma$ with unlabeled sample complexity $m_u = \tilde{O}\left(\frac{\vc(\H)d(\log(1+1/\gamma)+\log d) +\log 1/\delta}{\epsilon^2}\right)$ and labeled sample complexity $m_l = \tilde{O}\left(\frac{\vc_\C(\H)\log m_u+\log 1/\delta}{\epsilon^2}\right)$.
\end{theorem}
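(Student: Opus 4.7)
The plan is to reduce to the discrete $\C$-loss via Equation~\ref{eqn:loss_relations} and then bound $\rLo{\C}{P}(\hat h)$ through three ingredients: unlabeled uniform convergence, identification of a good representative in $\H''$, and agnostic RERM generalization over the finite class $\H''$. Throughout, fix $h^\ast \in \argmin_{h\in\H} \rLo{\V}{P}(h)$ and write $\eta = \rLo{\V}{P}(h^\ast)$. For the unlabeled step, Lemma~\ref{prop:vc_finite_loss_class} combined with the grid bound $k \leq O((1+1/\gamma)^d d^d)$ from Section~\ref{ss:global_discret} yields $\vc(\H_\C) = O\bigl(\vc(\H)\, d\, (\log(1+1/\gamma)+\log d)\bigr)$, and the same bound applies to the $\C$-margin loss class of $\H$ and to the pair class of ``both-robust disagreement'' events. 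The stated $m_u$ is then sufficient for simultaneous uniform convergence of each of these quantities to within $\epsilon/6$ with probability $\geq 1 - \delta/2$.

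The factor-$3$ argument proceeds as follows. Since the outer for-loop enumerates every labeling of $S_u$, it in particular considers $y^\ast = (h^\ast(x_1),\ldots,h^\ast(x_n))$. The RERM output $\tilde h = \A^\C_\H(S_u^{y^\ast})$ satisfies
\[
\rLo{\C}{S_u^{y^\ast}}(\tilde h) \;\leq\; \rLo{\C}{S_u^{y^\ast}}(h^\ast) \;=\; \rLo{\C,\mathrm{mar}}{S_u}(h^\ast) \;\leq\; \rLo{\V,\mathrm{mar}}{S_u}(h^\ast) \;\leq\; \eta + \epsilon/6,
\]
using $y^\ast = h^\ast(S_u)$, $\C \prec \V$, and a single-hypothesis Hoeffding bound. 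The pointwise inequality $\ell^{\C,\mathrm{mar}}(h, x) \leq \rlo{\C}(h, x, y)$ (valid for any $y$) then gives $\rLo{\C,\mathrm{mar}}{S_u}(\tilde h) \leq \eta + \epsilon/6$, which transfers via the unlabeled step to $\rLo{\C,\mathrm{mar}}{P}(\tilde h) \leq \eta + \epsilon/3$. Combining this with the pointwise bound $\rlo{\C}(h, x, y) \leq \rlo{\C}(h, x, h^\ast(x)) + \indct{y \neq h^\ast(x)}$ and $\Pr_P[y \neq h^\ast(x)] \leq \bLo{P}(h^\ast) \leq \eta$ yields $\rLo{\C}{P}(\tilde h) \leq 2\eta + \epsilon/3$. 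If $\tilde h \in \H''$ we are done. Otherwise, interpreting the pruning as keeping $h$ only if no earlier-processed hypothesis agrees with $h$ on every jointly $\C$-robust $x \in S_u$, we obtain some $h'' \in \H''$ with $\rLo{\C,\mathrm{mar}}{S_u}(h'') \leq \rLo{\C,\mathrm{mar}}{S_u}(\tilde h)$ and with $h''(x) = \tilde h(x)$ on every jointly $\C$-robust $x \in S_u$. A pointwise case analysis then yields
\[
\rlo{\C}(h'', x, y) \;\leq\; \rlo{\C}(\tilde h, x, y) + \ell^{\C,\mathrm{mar}}(h'', x) + \indct{x \in A(h'',\tilde h)},
\]
where $A(h'',\tilde h) = \{x : h'', \tilde h \text{ both } \C\text{-robust at } x,\, h''(x)\neq \tilde h(x)\}$. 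Since $A(h'',\tilde h)\cap S_u = \emptyset$ by pruning, the realizable-style uniform convergence from the unlabeled step gives $\Pr_P[x \in A(h'',\tilde h)] = O(\epsilon)$, and summing yields $\rLo{\C}{P}(h'') \leq 3\eta + \epsilon/2$.

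For the labeled step, pruning guarantees that any two distinct $h_1 \neq h_2 \in \H''$ disagree at some $x \in S_u$ where both are $\C$-robust, so each element of $\H''$ is uniquely determined by its $\C$-robust labeling pattern on $S_u$. The Sauer--Shelah lemma for $\C$-shattering then gives $|\H''| = O\bigl(n^{\vc_\C(\H)}\bigr)$, hence $\log|\H''| = O(\vc_\C(\H)\log n)$. Conditional on $S_u$, the class $\H''$ is a fixed finite set; since $S_l$ is independent of $S_u$, agnostic uniform convergence over $\H''$ with respect to $\rlo{\C}$ makes the stated $m_l$ suffice for $\rLo{\C}{P}(\hat h) \leq \min_{h'\in\H''}\rLo{\C}{P}(h') + \epsilon/2$. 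Chaining with the factor-$3$ bound above and Equation~\ref{eqn:loss_relations} yields $\rLo{\U}{P}(\nn_C(\hat h)) \leq \rLo{\C}{P}(\hat h) \leq 3\eta + \epsilon$, which is the desired factor-$3$ agnostic guarantee.

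The main obstacle is the factor-$3$ bookkeeping in the middle step: one must invoke uniform convergence simultaneously for the $\C$-adversarial loss, the $\C$-margin loss, and for the pair class of ``both-robust disagreement'' events, and then use the ordering of the outer pruning loop to transfer the margin-loss bound from $\tilde h$ to its surviving representative $h''$. A secondary subtlety is the data-dependence of $\H''$ on $S_u$, which is handled by conditioning on $S_u$ before applying the labeled-sample generalization argument.
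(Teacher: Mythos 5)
Your proposal follows the paper's skeleton (enumerate all labelings of $S_u$, argue the pruned class $\H''$ retains a hypothesis with $\C$-robust risk at most $3\rLo{\V}{P}(\H)+O(\epsilon)$, bound $|\H''|$ so that agnostic RERM over the finite class on $S_l$ works, then transfer to $\rlo{\U}$ via $\nn_C$ and Equation~\ref{eqn:loss_relations}), but your factor-$3$ bookkeeping is genuinely different. You seed the argument with the labeling $y^\ast=h^\ast(S_u)$, pay $2\eta$ for the resulting $\tilde h$ through $\rlo{\C}(h,x,y)\le\rlo{\C}(h,x,h^\ast(x))+\indct{y\neq h^\ast(x)}$, and then add the margin loss of the surviving representative $h''$ (at most $\eta$, via the processing order) plus the probability of the both-robust disagreement event, which is empirically zero. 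The paper instead imagines $S_u$ carrying its true $P$-labels, so the corresponding $h'\in\H'$ already satisfies $\rLo{\C}{P}(h')\le\rLo{\C}{P}(\H)+\epsilon$, and it charges the remaining $2\rLo{\C}{P}(\H)$ through $\rLo{\C,\text{par}}{S_u}(h',h'')\le 2\rLo{\C,\mathrm{mar}}{S_u}(h')$ combined with Lemmas~\ref{lemma:uc-pair} and~\ref{lemma:par-vs-robust}. Both routes need the same three uniform-convergence facts (adversarial, margin, and pair/disagreement loss classes over $\H$, all handled by the Lemma~\ref{prop:vc_finite_loss_class}-style argument), and both share the paper's level of informality about the pruning order and about identifying $h(x)$ with the constant value of $h$ on $\C(x)$, so this part is an acceptable alternative. (One local slip: $\rLo{\C}{S_u^{y^\ast}}(h^\ast)=\rLo{\C,\mathrm{mar}}{S_u}(h^\ast)$ is not an equality, since in general $x\notin\C(x)$; your chain survives because $\rlo{\C}(h^\ast,x,h^\ast(x))\le\ell^{\V,\mathrm{mar}}(h^\ast,x)$ holds directly, using $x\in\V(x)$ and $\C\prec\V$.)

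The genuine gap is the size bound for $\H''$. You claim that since distinct members of $\H''$ disagree at some $x\in S_u$ where both are $\C$-robust, ``the Sauer--Shelah lemma for $\C$-shattering'' gives $|\H''|=O(n^{\vc_\C(\H)})$. Standard Sauer--Shelah does not apply: the objects being counted are \emph{partial} patterns in $\{0,1,\star\}^n$ (a hypothesis is informative only at points where it is $\C$-margin-robust), and two members of $\H''$ are separated only through coordinates where both are non-$\star$. Counting such families in terms of $\vc_\C$ is precisely the nontrivial step for which the paper invokes the disambiguation theorem of \cite{alon2022theory} (Lemma~\ref{lemma:partial}), which injects $\H''$ into a total class of size $|S_u|^{O(\vc_\C(\H)\log|S_u|)}$ and hence yields $\log|\H''|=O(\vc_\C(\H)\log^2|S_u|)$ rather than $O(\vc_\C(\H)\log|S_u|)$; a clean $n^{O(\vc_\C(\H))}$ bound for such pairwise ``robustly distinguishable'' families is not something the classical shifting/induction proof delivers, and improving the disambiguation bound is known to be delicate for partial concept classes. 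So as written this step is unsubstantiated; you should either prove a dedicated counting lemma or fall back on the paper's bound, which only costs an extra logarithmic factor and is still consistent with the stated $\tilde O$ labeled sample complexity. With that replacement (and the $\epsilon$, $\delta$ budgets rebalanced accordingly), the rest of your argument goes through.
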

\begin{proof}[Proof sketch]
The proof involves two steps. First we show $|\H''|\leq O\left(|S_u|^{\vc_\C(\H)\log|S_u|}\right)$, and then we show there exists $h''\in\H''$ such that $\rLo{\C}{P}(h'')\leq 3\cdot\rLo{\C}{P}(\H)+\epsilon$.

To bound the size, we use a more general version of Sauer lemma from~(\cite{alon2022theory} Theorem 12). Recall that the standard Sauer lemma shows that the number of different labelings induced by a hypothesis class $\H$ on $m$ points is bounded by $O\left(m^{\vc(\H)}\right)$. Here, two labelings are considered different if they are different on at least one of the $m$ points. The way we have defined $\H''$, any two $h_1,h_2\in\H''$ are \emph{robustly} different on at least one point, i.e., there exists $x\in S_u$ such that $h_1(x)\neq h_2(x)$ and $\ell^{\C,\mathrm{mar}}(h_1,x)=\ell^{\C,\mathrm{mar}}(h_2,x)=0$. The generalized Sauer lemma can be used to show that the number of such hypotheses depends on the size of the largest set they can \emph{robustly} shatter, in particular, that $|\H''|\leq O\left(|S_u|^{\vc_\C(\H)\log|S_u|}\right)$.

To show that $\H''$ contains a good hypothesis, a similar argument as in the realizable case shows that there exists $h'\in\H'$ such that $\rLo{\C}{P}(h')\leq\rLo{\C}{P}(\H)+\epsilon$, implying $\H'$ contains a good hypothesis. But the risk is that it might get pruned out in the pruning step. However, if it gets pruned, it is because of another hypothesis $h''$ that robustly labels more points of $S_u$ than $h'$ and is consistent with $h''$ on every $x\in S_u$ that is robustly labeled by both. For any $h$, let $S_u^h=\{x\in S_u~|~\ell^{\C,\mathrm{mar}}(h,x)=0\}$. Then $h'$ and $h''$ must robustly agree on at least $\frac{(1-|S_u^{h'}|+1-|S_u^{h''}|)}{|S_u|}\leq 2\cdot\frac{(1-|S_u^{h'}|)}{|S_u|}\approx 2\cdot\rLo{\C}{P}(\H)$ fraction of the points. Since $\rLo{\C}{P}(h')\approx\rLo{\C}{P}(\H)$, and $h', h''$ robustly agree on $2\cdot\rLo{\C}{P}(\H)$ fraction of points, we get that $\rLo{\C}{P}(h'')\approx 3\cdot\rLo{\C}{P}(\H)$. Here the last few steps rely on some carefully constructed generalization arguments that have been laid out in Appendix~\ref{appsec:agnostic-semi}.
\end{proof}

\section{Discussion}
The main message of this work is that adding tolerance vastly simplifies the task of adversarially robust learning, and is perhaps a more natural formulation of the robust learning problem in the first place. Recent developments in statistical learning theory often involved establishing novel sample complexity bounds or characterizations of learnability through rather impractical learners~\citep{montasser2022adversarially,BrukhimCDMY22}. 
While these provide important insights into learnability, we view our work also as promoting more PAC type analysis of methods that are closer to what is used in applications, as well as explorations into how the frameworks of analysis we choose may affect the feasibility of developing such guarantees for natural learners. Our work shows how the slight shift to tolerance for the adversarial robustness task enables PAC type guarantees for simpler learners. 
One interesting open question on a technical level is to eliminiate the dependence on $d$ for $\gamma=1$. \cite{montasser2021transductive} prove the existence of a \emph{transductive} learner for $\gamma=1$ whose sample complexity is independent of $d$. It will be nice to explore if our techniques can be used to obtain similar bounds in the inductive case. Our algorithms depend on $d$ even for $\gamma=1$ essentially because we need to cover the set $\V(x)$ with $\W$ thus giving us a cover of size $(1+1/\gamma)^d$. If, instead, we only had to cover $\U(x)$, we would get $1/\gamma^d$, which, for $\gamma=1$ becomes independent of $d$.

\bibliography{refs}

\clearpage
\appendix

\section{Additional discussion on the notion of tolerance}
\label{appsec:why-tolerance}
As has been argued before~\citep{ashtiani2023adversarially}, tolerance is a good way to capture the user's ambivalence over precisely which perturbation type to be robust against. If the user has a specific perturbation type $\U$ in mind, then we aim to find a hypothesis $h$ such that $\rLo{\U}{P}(h)\leq\rLo{\U}{P}(\H)+\epsilon$. But if the user is happy with any perturbation type between $\U$ and $\V$, we should aim for $\rLo{\U}{P}(h)\leq\rLo{\U}{P}(\H)+\xi+\epsilon$, where $\xi=\rLo{\V}{P}(\H)-\rLo{\U}{P}(\H)$ denotes the extra slack due to user's ambivalence. This gives us Definition~\ref{def:adv_learn_tol}. Below we describe another motivation to study tolerance.

\paragraph{Tolerance and learning with smoothed adversaries}
Standard learning theory tells us that learning with respect to iid data is characterized by VC-dimension, but learning when data is generated by a worst-case adversary is characterized by Littlestone dimension (which can be much bigger than VC-dimension)~\citep{shalev2014understanding}. Inspired by the smoothed analysis of algorithms~\cite{spielman2004smoothed}, it has been proved that if the worst-case adversary is ``smoothed" then learning becomes characterized by VC-dimension again~\citep{block2022smoothed, haghtalab2020smoothed,haghtalab2024smoothed}. To smooth an adversary, one takes the points generated by the worst-case adversary and perturbs them uniformly at random within a neighbourhood. In adversarially robust learning a worst-case adversary is allowed to generate points anywhere within $\U(x)$. One can consider a smoothed adversary in a similar way, where the point $z$ generated by the worst-case adversary is perturbed uniformly at random within $\W(z)$. Learning with respect to this adversary is then exactly the problem of tolerantly robust learning. Our results essentially show that adversarially robust learning with smoothed adversaries is easier.

\section{Basic VC theory}\label{appsec:vctheory}

The following definition abstracts the notion of PAC learning~\citep{vapnikcherv71, Valiant84}.

\begin{definition}[PAC Learner]\label{def:learn}
Let $\P$ be a set of distributions over $X\times Y$ and $\H$ a hypothesis class. We say $\A$ PAC learns $\H$ with respect to $\P$ with $m_\A: (0,1)^2\to \mathbb{N}$ samples if the following holds:
for every distribution $P\in \P$ over $X\times Y$, and every $\epsilon,\delta \in (0,1)$, if $S$ is an \iid sample of size at least $m_\A(\epsilon, \delta)$ from $P$, then with probability at least $1-\delta$ (over the randomness of $S$) we have
\[
\Lo{P}(\A(S)) \leq \Lo{P}(\cH) + \epsilon.
\]
$\A$ is called an \emph{agnostic learner} if $\P$ is the set of all distributions\footnote{Subject to mild measurability conditions, namely the loss sets being measurable for all $h\in\H$.} on $X\times Y$, and a \emph{realizable learner} if $\P=\{P:\Lo{P}(\cH) = 0\}$.
\end{definition}

The following is a classic result of PAC learnability for finite VC classes.
\begin{theorem}[\citep{vapnikcherv71,blumer1989learnability,haussler1992decision}]
Let $\H\subseteq \{0,1\}^X$ be a hypothesis class and let $\ell$ be a loss function such that $\vc(\H_\ell) < \infty$ is finite. 
 Then, any empirical risk minizing learner $\A$ PAC learns $\H$ with respect to loss $\ell$ with sample complexity 
  $O\left(\frac{\vc(\H_\ell)+\log(1/\delta)}{\epsilon^2}\right)$ 
   in the agnostic case and sample complexity 
   $\tilde{O}\left(\frac{\vc(\H_\ell)+\log(1/\delta)}{\epsilon}\right)$ in the realizable case.
\end{theorem}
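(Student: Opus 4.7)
The plan is to reduce both bounds to a uniform convergence statement over the loss class $\H_\ell \subseteq 2^{X\times Y}$ and then apply the standard ``ERM sandwich''. Since by definition $\ell(h,x,y) = \indct{(x,y)\in h_\ell}$, the expected loss $\Lo{P}(h)$ equals $P(h_\ell)$ and the empirical loss $\Lo{S}(h)$ equals the empirical frequency of $h_\ell$ on $S$; so any uniform gap bound on the set-indicator system $(X\times Y, \H_\ell)$ translates directly into one on $\Lo{\cdot}$. Given a two-sided uniform gap of $\epsilon/2$, for $\hat h = \A(S)$ output by ERM and any $h^* \in \H$ with $\Lo{P}(h^*)\leq \Lo{P}(\H)+\epsilon/2$, the chain $\Lo{P}(\hat h)\leq \Lo{S}(\hat h)+\epsilon/2\leq \Lo{S}(h^*)+\epsilon/2\leq \Lo{P}(h^*)+\epsilon$ closes the argument.

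For the agnostic bound I would prove two-sided uniform convergence by the classical symmetrization-plus-Sauer-Shelah recipe. Draw an independent ghost sample $S'\sim P^m$; a standard symmetrization step (valid once $m \gtrsim 1/\epsilon^2$) upper bounds $\Pr[\sup_h |\Lo{P}(h)-\Lo{S}(h)|\geq \epsilon/2]$ by twice $\Pr[\sup_h|\Lo{S}(h)-\Lo{S'}(h)|\geq \epsilon/4]$. Conditioning on the multiset $S\cup S'$ of size $2m$, the supremum runs over at most $(2em/d)^d$ distinct behaviors of $\H_\ell$ (Sauer-Shelah with $d=\vc(\H_\ell)$). Bounding each behavior via Hoeffding applied to a uniformly random partition of the $2m$ points into two halves, and a union bound, yields failure probability $O((2em/d)^d e^{-\Omega(m\epsilon^2)})$; setting this $\leq \delta$ and inverting for $m$ delivers the stated $O((d+\log(1/\delta))/\epsilon^2)$.

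For the realizable bound I would replace the subgaussian tail by a sharper one-sided exponential estimate. Consider $B = \{\exists h\in\H : \Lo{P}(h)\geq \epsilon \text{ and } \Lo{S}(h)=0\}$. Introducing a ghost $S'\sim P^m$, a Chernoff bound shows that on $B$ the witnessing $h$ further satisfies $\Lo{S'}(h)\geq \epsilon/2$ with probability at least $1/2$ once $m\geq 8/\epsilon$; so $\Pr[B]$ is at most twice the probability that some $h\in\H$ is zero on $S$ yet fires on $\geq \epsilon m/2$ points of $S'$. Conditioning on $S\cup S'$, Sauer-Shelah again yields $\leq (2em/d)^d$ behaviors, and for any fixed behavior with $\geq \epsilon m/2$ firings, a uniform random partition of the $2m$ points lands all firings inside the right half $S'$ with probability at most $2^{-\epsilon m/2}$. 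A union bound and inversion give $m = \tilde O((d+\log(1/\delta))/\epsilon)$.

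The main technical obstacle is obtaining the fast $1/\epsilon$ rate in the realizable case: the subgaussian-in-$\epsilon^2 m$ tail from the agnostic argument is too weak, and one must exploit the one-sided structure (no firings in $S$) to upgrade it to an exponential-in-$\epsilon m$ combinatorial tail via the random-partition argument. This requires setting up the symmetrization so that Sauer-Shelah is applied only to the projection of $\H_\ell$ onto the $2m$-point joint sample, not to the (possibly infinite) class itself; the agnostic half is then a direct consequence of the same symmetrization scaffolding combined with Hoeffding.
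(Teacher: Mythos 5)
The paper offers no proof of this statement; it is quoted as a classical background result and attributed to the cited references, so there is nothing internal to compare against. Your argument is precisely the standard route those references take: identify $\Lo{P}(h)$ and $\Lo{S}(h)$ with the true and empirical measures of the loss set $h_\ell$, prove uniform convergence for the set system $\H_\ell$ by ghost-sample symmetrization, Sauer--Shelah on the projection to the $2m$ points, and a per-behavior tail bound (Hoeffding for the two-sided agnostic statement, the one-sided ``no firings in $S$'' random-partition argument of Blumer et al.\ for the fast realizable rate), then close with the usual ERM sandwich. The realizable half is correct and delivers $\tilde{O}\left(\frac{\vc(\H_\ell)+\log(1/\delta)}{\epsilon}\right)$ as claimed, since the tilde absorbs the $\log(1/\epsilon)$ coming from inverting $(2em/d)^d 2^{-\epsilon m/2}\leq\delta$.

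One small but concrete shortfall: in the agnostic case the same inversion of $(2em/d)^d e^{-\Omega(m\epsilon^2)}\leq \delta$ gives $m=O\left(\frac{\vc(\H_\ell)\log(1/\epsilon)+\log(1/\delta)}{\epsilon^2}\right)$, not the stated $O\left(\frac{\vc(\H_\ell)+\log(1/\delta)}{\epsilon^2}\right)$, because the theorem as written has no tilde on the agnostic bound. Removing that $\log(1/\epsilon)$ factor is known but requires more than symmetrization plus Sauer--Shelah, e.g.\ chaining with Haussler's packing bound or Talagrand's sharpening of VC uniform convergence. So either cite that refinement for the agnostic half, or note that the crude bound suffices for every downstream use in the paper, since all sample complexity statements there are themselves $\tilde{O}(\cdot)$.
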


For completeness, we here also provide the definition of the VC-dimension for collection of subsets. Note that any binary hypothesis class $\H\subseteq \{0,1\}^X$ is also a collection of subsets of $X\times\{0,1\}$.
\begin{definition}[VC-dimension]
    Let $Z$ be some domain set and $\C\subseteq 2^Z$ be a collection of subsets of $Z$. We say that some set $K\subseteq Z$ is \emph{shattered} by $\C$ if $|\{c\cap K ~:~ c\in \C\}| = 2^{|K|}$. The \emph{VC-dimension} of $\C$ is the supremum over the sizes of domain subsets that are shattered by $\C$.
\end{definition}

\section{Proof of Theorem \ref{thm:lower_bound}}
\label{appsec_lower_bound_proof}

As a reminder, we restate the Theorem here:\\

\noindent{\bf Theorem \ref{thm:lower_bound}}
{\sl For any $r\in\reals$, any $d\in \naturals$ and any $g>0$, there exist a hypothesis class $\H$ over $X = \reals^d$ with $\vc(\H) = 1$ that is not properly tolerantly robustly PAC learnable (even in the tolerantly realizable case) for $\U(x) = \B_r(x)$ and $\V(x) = \B_{(1+\gamma)r}(x)$ for any $\gamma$ with $0<\gamma \leq g$.}\\

\begin{proof}
Let $r\in\reals$, $d\in \naturals$ and $g>0$ be given. We will provide a construction with $X = \reals^1$, which can readily be embedded into any higher dimensional space. Our definition of a hypothesis class $\H$ will closely follow the construction of the hardness of proper learning in the case of standard adversarial robustness without tolerance (Theorem 1 by~\cite{MontasserHS19}). 

We construct a hypothesis class $\H$ with VC-dimension $1$ for which the VC-dimension of the adversarial loss class $\H_\V$ is arbitrarily large. Moreover, we will define $\H$ in such a way that the adversarial losses with respect to $\V$ and $\U$ are identical, and thus tolerance will not alleviate the difficulty (as it would in the original construction).

Let $n\in \naturals$ be given. We now first construct a class $\H_n$ with VC-dimension $1$ and loss class VC-dimension $n$. Consider $n$ points $x_1, x_2, \ldots, x_n \in \reals$ spaced apart with distances $|x_i - x_j| > 2r(1+g)$ for all $i\neq j$. That is, the points are positioned so that balls of radius $(1+\gamma)r$ around any two $x_i \neq x_j$ do not intersect for any $0<\gamma < g$. 
Let $(p_i)_{i\in\naturals}$ be an enumeration of prime numbers (that is $p_1 = 2, p_2 = 3, p_3 = 5$ etc). Define a one to one mapping $f$ between subsets of $[n] = \{1,2,3,\ldots, n\}$ and the first  $2^n$ prime numbers. 

For each subset $Z\subseteq [n]$, we define a hypothesis $h_Z$ as follows:
\[
h_Z(x) = \begin{cases}
1 \quad\text{if } x  = x_j + \frac{r}{(p_{f(Z)})^m} \text{ for } j\in Z, m\in\naturals\\
0 \quad\text{otherwise}
\end{cases}
\]
That is, each hypothesis $h_Z$ labels most of $X = \reals$, and in particular all points $x_1, x_2, \ldots, x_n$ with $0$. The only points $h_Z$ labels with $1$ are the above defined sequences approaching the points $x_j$ where $j$ is an index in $Z$. These sequences are the points $x_j + \frac{r}{(p_{f(Z)})^m}$  for $m\in\naturals$, that is defined by a prime number that is uniquely associated with the set $Z$. 

Now we set $\H_n$ be the set of all these hypotheses, that is
$\H_n = \{h_Z ~:~ Z\subseteq [n]\}$.
Note that in this construction, every point in $X = \reals$ gets labeled with $1$ by at most one hypothesis in $\H_n$, and thus $\vc(\H_n) = 1$.

Now note that for a subset $Z\subseteq [n]$, and an $j\in Z$, the point $x_j$ is arbitrarily close to points that are labeled with $1$ by $h_Z$, and thus the robust loss $\rlo{\B}(h_Z, x_j, 0 ) = 1$ for any ball perturbation type $\B(x) = \B_\rho(x)$ for any (arbitrarily small) radius $\rho >0$. In particular $\rlo{\U}(h_Z, x_j, 0 ) = \rlo{\V}(h_Z, x_j, 0 ) = 1$ if (and only if) $j\in Z$. This shows that the VC dimension of the robust loss class $\vc(\H_\V) = \vc(\H_\U) = n$.

From here the argument for impossibility of proper learning in the $(\U, \V)$-tolerantly realizable setting proceeds as in the proof of Theorem 1 by \cite{MontasserHS19}. We restrict the class $\H_n$ to only contain functions corresponding to subsets $n/2$:
\[
\H_n' = \{h_Z ~:~ Z\subseteq [n], |Z| = n/2\}.
\]
Now, we consider the set of distributions $\P_n$ that ditribute their mass uniformly over $n/2$ domain points among the $x_1, x_2, \ldots x_n$ with label $0$. These distributions are $\V$-robustly realizable by 
$\H_n'$, in particular the function $h_Z$ for $Z$ that is the complement of the distribution's support in $\{x_1, x_2, \ldots x_n\}$ has robust loss $0$ with respect to $\V$. However, standard arguments show that any \emph{proper} learner that sees only samples of sizes $n/4$ cannot correctly identify this required complement set and thus has high probabitiy of outputting a function $h_Z\in\H_n'$ where $Z$ intersects significantly with the support of the distribution and thus suffers high robust loss. In particular it suffers this high robust loss with respect to $\V$ and equally with respect to $\U$ or balls of \emph{any} smaller radius. Thus, the relaxation to the tolerant setting does not facilitate proper learning in this case. 

Finally, by copying the above construction into disjoint intervals of $X = \reals$ for all $m\in\naturals$ the resulting class $\H = \bigcup_{m\in\naturals} \H_m'$ has VC dimension $\vc(\H) = 1$, but cannot be learned by any proper learner in the tolerantly realizable setting.

\end{proof}

\section{Local discretization for both realizable and agnostic setting}\label{appsec:local_discretization}

A slightly modified version of our first method Algorihm \ref{alg:supervised} can be shown to also work in the agnostic case. For completeness, we state the modified algorithm and complete and unified analysis for both realizabe and agnostic settings here. Algorithm \ref{alg:supervised_local_agnostic} below is a variant where the initial RERM hypothesis is chosen with respect to a discretized type $\C\prec\V$. 

\begin{algorithm}
\caption{RERM-and-Smooth}\label{alg:supervised_local_agnostic}
\begin{algorithmic}
\STATE {\bf Input:} Data $S =  ((x_1, y_1), \ldots, (x_m, y_m))$, access to an RERM oracle $\A^\V_\H$, and smoothing perturbation type $\W$.
\STATE Set $\hat{h} = \A^{\V}_{\H}(S)$
\STATE {\bf Output:} $\sm_{\W}(\hat{h})$ defined by 
\STATE \qquad\qquad $\sm_{\W}(\hat{h})(x) = \indct{\Ex_{x'\sim\W(x)} \hat{h}(x') \geq 1/2}$
\end{algorithmic}
\end{algorithm}

\begin{theorem}
Let $\H$ be a hypothesis class of finite VC-dimension ($\vc(\H) <\infty$), let $0< \eta < 1/3$, and let $\V,\U, \W$ and $\C$ be perturbation types that satisfy \\
$\bullet$ $\U\prec \V$ and $\C\prec \V$, \\ 
$\bullet$ $\W(z)\in\V(x)$ for all $x\in X$ and $z\in \U(x)$,\\
$\bullet$ $\C(x)$ is finite and an $\eta$-net of $\H$ with respect the uniform measure $\mu_{\V(x)}$ over $\V(x)$ for all $x\in X$,\\
$\bullet$ $\mu_{\V(x)}(\W(z))\geq 3\eta$ for all $x\in X$ and $z\in\U(x)$.

Then Algorithm \ref{alg:supervised} $(\U,\V)$-tolerant robustly PAC learns $\H$. Moreover, the perturbation type $\C$ can be chosen so that the resulting sample complexity in the tolerantly realizable case is bounded by
    \[
    m(\epsilon, \delta) = 
    \tilde{O}\left(\frac{\vc(\H)\log(\vc(\H)/\eta) + \log(1/\delta)}{\epsilon} \right)
    \]
    and in the agnostic case by
   \[
    n(\epsilon, \delta) = 
    \tilde{O}\left(\frac{\vc(\H)\log(\vc(\H)/\eta) + \log(1/\delta)}{\epsilon^2} \right)
    \]
\end{theorem}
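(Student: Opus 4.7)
The plan is to mirror the structure of the proof of Theorem~\ref{thm:supervised_realizable} while extending its conclusion uniformly to both the realizable and the agnostic setting. I would read the modified algorithm as running RERM against the discretized loss $\rlo{\C}$ (rather than $\rlo{\V}$); this is the change that makes the agnostic generalization step work, and it accounts for the remark in the text that the learner now needs access to $\C$, losing the attractive property of Remark~\ref{rem:attractively_agnostic}. The backbone of the argument is the pointwise sandwich
\[
\rlo{\U}(\sm_\W(h), x, y) ~\leq~ \rlo{\C}(h, x, y) ~\leq~ \rlo{\V}(h, x, y)
\]
valid for every $h \in \{0,1\}^X$ and every $(x,y) \in X \times Y$, together with VC-style generalization on the discretized loss class $\H_\C$.

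The right inequality is immediate from $\C \prec \V$. The left inequality is precisely the argument already executed in the proof of Theorem~\ref{thm:supervised_realizable}, which I would transcribe with no change: if $\rlo{\U}(\sm_\W(h), x, y) = 1$, some $z \in \U(x)$ has $\sm_\W(h)(z) \neq y$, forcing $\Pr_{x' \sim \W(z)}[h(x') \neq y] > 1/2$; combining $\W(z) \subseteq \V(x)$ with the density bound $\mu_{\V(x)}(\W(z)) \geq 3\eta$ lifts this to $\mu_{\V(x)}(h^{-1}(1-y)) > (3/2)\eta > \eta$; the $\eta$-net property of $\C(x)$ for $\H$ with respect to $\mu_{\V(x)}$ then produces a witness $c \in \C(x) \cap h^{-1}(1-y)$, so $\rlo{\C}(h,x,y) = 1$.

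Next, I would invoke the standard $\eta$-net construction to choose $\C$ with $|\C(x)| \leq k = O(\vc(\H)/\eta)$ for all $x$. Lemma~\ref{prop:vc_finite_loss_class} then bounds $\vc(\H_\C) = O(\vc(\H)\log(\vc(\H)/\eta))$, and the classical VC theorem of Appendix~\ref{appsec:vctheory} applies to $\rlo{\C}$: an RERM learner $\A^\C_\H$ PAC-learns $\H$ with respect to $\rlo{\C}$ using $\tilde{O}((\vc(\H_\C) + \log(1/\delta))/\epsilon)$ samples in the realizable case and $\tilde{O}((\vc(\H_\C) + \log(1/\delta))/\epsilon^2)$ in the agnostic case.

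To assemble: in the tolerantly realizable case, $\rLo{\V}{P}(\H) = 0$ together with $\C \prec \V$ gives $\rLo{\C}{P}(\H) = 0$, so the realizable VC guarantee yields $\rLo{\C}{P}(\hat{h}) \leq \epsilon$ with probability $\geq 1-\delta$, and the sandwich closes the proof with $\rLo{\U}{P}(\sm_\W(\hat{h})) \leq \epsilon$. In the agnostic case, taking expectations of $\rlo{\C}(h,\cdot) \leq \rlo{\V}(h,\cdot)$ over $\H$ gives $\rLo{\C}{P}(\H) \leq \rLo{\V}{P}(\H)$, so the chain
\[
\rLo{\U}{P}(\sm_\W(\hat{h})) ~\leq~ \rLo{\C}{P}(\hat{h}) ~\leq~ \rLo{\C}{P}(\H) + \epsilon ~\leq~ \rLo{\V}{P}(\H) + \epsilon
\]
finishes the argument. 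There is no substantively new obstacle compared to Theorem~\ref{thm:supervised_realizable}; the only mild subtlety is that RERM must be performed against $\rlo{\C}$ (not $\rlo{\V}$), since only $\H_\C$ is known to have finite VC-dimension and hence only $\rlo{\C}$ admits the agnostic uniform-convergence bound needed to reach an additive $\epsilon$ against $\rLo{\V}{P}(\H)$.
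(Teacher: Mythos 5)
Your proof is correct and follows essentially the same route as the paper's own argument: the same pointwise sandwich $\rlo{\U}(\sm_\W(h),x,y) \le \rlo{\C}(h,x,y) \le \rlo{\V}(h,x,y)$, the same choice of $\C$ with $|\C(x)| = O(\vc(\H)/\eta)$ and bound on $\vc(\H_\C)$ via Lemma~\ref{prop:vc_finite_loss_class}, and the same realizable/agnostic RERM conclusions. Your reading that $\hat{h}$ must be an empirical minimizer of $\rlo{\C}$ (despite the $\A^\V_\H$ appearing in the pseudocode of Algorithm~\ref{alg:supervised_local_agnostic}) matches the paper's stated intent and is exactly what its proof uses.
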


\begin{proof}
Let $h$ be some hypothesis and let $\sm_{\W}(h)$ be the $\W$-smoothed version of $h$ as defined above. We start by proving that for any distribution $P$, we have 
\begin{equation}\label{eqn:loss_relations_smooth_appendix}
\rLo{\U}{P}(\sm_{\W}(h))  ~\leq~ \rLo{\C}{P}(h) ~\leq~ \rLo{\V}{P}(h)
\end{equation}
The second inequality is immediate by observing that $\C(x) \subseteq \V(x)$ for all $x\in X$. To prove the the first inequality we will show that it holds pointwise, that is $\rlo{\U}(\sm_{\W}(h), x, y)  \leq \rlo{\C}(h,x,y)$ for all $(x, y)\in X\times Y$.

Indeed, assume that for some $(x,y) \in X \times Y$ 
we have $\rlo{\U}(\sm_{\W}(h), x, y) = 1$. This means that there exists some $z\in \U(x)$ with $\sm_{\W}(h)(z) \neq y$. This 
implies that $\indct{\Ex_{x'\sim\W(z)} \hat{h}(x') \geq 1/2} \neq y$, which means that $\Pr_{x'\sim \W(x)}[\hat{h}(x')\neq y] > 1/2$. Now 
$\mu_{\V(x)}(\W(z))\geq 3\eta$ 
together with $\W(z)\subseteq\V(x)$ 
(where $\mu_{\V(x)}$ is a uniform measure over $\V$) 
implies that $\Pr_{x'\sim \V(x)}[\hat{h}(x')\neq y] \geq (3/2)\eta> \eta$. 
Now, since $\C(x)$ is an $\eta$-net with respect to $\mu_{\V(x)}$ for $\H$, this implies that there exists a $c\in \C(x)$ with $\hat{h}(c) \neq y$. Thus, we have $ \rlo{\C}(h,x,y) = 1$ which is what we needed to show.

Since $\vc(\H)$ is finite, $\C$ can be chosen 
so that $|\C(x)| \leq 3\vc(\H)/\eta$ and satisfy $\C(x)$  being an $\eta$-net for $\H$ with respect to $\mu_{\V(x)}$ for each $\V(x)$.
Note that since the perturbation sets of type $\C$ are finite and their sizes are uniformly upper bounded by $3\vc(\H)/\eta$, the VC-dimension of the loss class of $\H$ with respect to $\rlo{\C}$ is bounded by $\vc(\H)\log(3\vc(\H)/\eta)$ (see Lemma \ref{prop:vc_finite_loss_class}).
\paragraph{Realizable case.}
Note that, since $\C \prec \V$, any distribution $P$ that is realizable by $\H$ with respect to $\rlo{\V}$ (as assumed in the tolerantly realizable case) is also realizable with respect to $\rlo{\C}$. Now the above bound on the VC-dimension on the loss class with respect to $\rlo{\C}$ implies that any RERM learner is a successful robust PAC learner for perturbation type $\C$, and thus, with the stated samples sizes, with high probability at least $1-\delta$ we have $\rLo{\C}{P} (\hat{h}) \leq \epsilon$. Now Equation \ref{eqn:loss_relations_smooth} implies $\rLo{\U}{P}(\sm_{\W}(\hat{h})) \leq \epsilon$ as required.
\paragraph{Agnostic case.}
Note that Equation \ref{eqn:loss_relations_smooth} also implies that for any distribution $P$, the approximation error of $\H$ with respect to $\rlo{\C}$ is upper bounded by the approximation error of $\H$ with respect to $\rlo{\V}$
\[
\rLo{\C}{P}(\H) ~\leq ~ \rLo{\V}{P}(\H)
\]
By the bound on the VC-dimension of the loss class of $\H$ with respect to $\rlo{\C}$, any empirical risk minimizing learner for $\H$ with respect to $\rlo{\C}$ outputting $\hat{h}$, is a successful robust PAC learner with respect to $\rlo{C}$, yielding
\[
\rLo{\C}{P}(\hat{h}) \leq \rLo{\C}{P}(\H) +\epsilon ~\leq ~ \rLo{\V}{P}(\H) +\epsilon
\]
with high probability at least $1-\delta$ over the training samples for the stated sample sizes in the agnostic case.
Combining this expression with  Equation \ref{eqn:loss_relations_smooth}, we obtain 
\[
\rLo{\U}{P}(\sm_{\W}(\hat{h})) ~\leq ~ \rLo{\C}{P}(\hat{h}) ~\leq~ \rLo{\V}{P}(\H) +\epsilon
\]
as required for tolerantly robust learning. This completes the proof of the theorem.

\end{proof}

\begin{remark}
  For the case that $\V$, $\U$ and $\W$ are Euclidian balls of radii $r(1+\gamma)$, $r$ and $r\gamma$ respectively in $\reals^d$, have $\mu_{\V(x)}(\W(z)) = \frac{\gamma^d}{(1+\gamma)^d}$. Thus we can chose $\eta = \frac{1}{3(1+1/\gamma)^d}$ and obtain sample complexity bounds
    \[
    m(\epsilon, \delta) = 
    \tilde{O}\left(\frac{\vc(\H)(\log(\vc(\H)) + d \log(1+1/\gamma)) + \log(1/\delta)}{\epsilon} \right)
    \]
in the tolerantly realizable case and
   \[
    n(\epsilon, \delta) = 
    \tilde{O}\left(\frac{\vc(\H)(\log(\vc(\H)) + d \log(1+1/\gamma)) + \log(1/\delta)}{\epsilon^2} \right)
    \]
    in the agnostic case.
\end{remark}

\section{Proofs for agnostic semi-supervised learning}
\label{appsec:agnostic-semi}
\noindent{\bf Theorem~\ref{thm:agnostic-semi}}
{\sl Algorithm~\ref{alg:semi-supervised-agnostic} is a factor-3 agnostic learner in the semi-supervised setting with tolerance parameter $\gamma$ with unlabeled sample complexity $m_u = \tilde{O}\left(\frac{\vc(\H)d(\log(1+1/\gamma)+\log d) +\log 1/\delta}{\epsilon^2}\right)$ and labeled sample complexity $m_l = \tilde{O}\left(\frac{\vc_\C(\H)\log m_u+\log 1/\delta}{\epsilon^2}\right)$.}\\

\begin{proof}
    We divide the proof into two parts. First, we show that $\H''$ has a small size, and then we show that $\H''$ contains a hypothesis whose robust loss with respect to $\C$ is not much worse compared to the optimal hypothesis $h^*$. These two facts combined show that $\hat{h}$ gets a small robust loss with respect to $\C$. This means $\nn_C(\hat{h})$ gets a small robust loss with respect to $\V$. 

    To show that $\H''$ has a small size we use a result from~\citep{alon2022theory}. For any hypothesis $h\in\H$, define a ``partial" hypothesis $\partialh{h}$ as follows. We say $\partialh{h}(x) = 0$ if $h(z) = 0$ for every $z\in\C(x)$, $\partialh{h}(x) = 1$ if $h(z) = 1$ for every $z\in\C(x)$, and $\partialh{h}(x)=\star$ otherwise. Transforming every hypothesis of $\H$ in this way gives us a new hypothesis class $\partialh{\H}$. We call such a hypothesis class a partial hypothesis class, and a class that does not contain any partial hypotheses a total hypothesis class. Given any set $S=\{x_1,\ldots,x_m\}\in X^m$, we say that $\partialh{\H}$ shatters $S$ if for every labeling $\{y_1,\ldots,y_m\}\in\{0,1\}^m$, there exists some $\partialh{h}\in\partialh{\H}$ such that $\partialh{h}(x_i) = y_i$ for every $x_i \in S$. Note that even though $\partialh{h}$ is allowed to output $\star$, it is supposed to shatter a set only using the labels 0 and 1. The VC-dimension of the class $\partialh{\H}$ is defined as the size of the biggest set that is shattered by it. It is easy to see that this is equal to $\vc_\C$. We restate the following lemma for our context, which was originally shown in~\citep{alon2022theory}.
    
    \begin{lemma}[~\cite{alon2022theory} Theorem 12] 
    \label{lemma:partial}
    Given the set $S_u$ and the class $\partialh{\H}$ of partial hypotheses, there exists a class $\tilde{\H}$ of total hypotheses such that $|\tilde{\H}|\leq |S_u|^{O\left(\vc_\C(\H)\log |S_u|\right)}$, and for every $\partialh{h}\in\partialh{\H}$, there exists $\tilde{h}\in\tilde{\H}$ such that $\tilde{h}(x) = \partialh{h}(x)$ for every $x\in S_u$ satisfying $\partialh{h}(x)\neq\star$.   \end{lemma}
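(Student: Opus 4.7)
The plan is to prove Lemma~\ref{lemma:partial} by combining a Sauer-Shelah-style bound for partial concept classes with a boosting argument that lifts a weak disambiguator into a strong one, mirroring the approach of~\cite{alon2022theory}. Throughout, I would identify each $\partialh{h} \in \partialh{\H}$ with its restriction $\partialh{h}|_{S_u} \in \{0,1,\star\}^{|S_u|}$, so that ``extending $\partialh{h}$ on its non-$\star$ positions'' is a finite combinatorial condition on total hypotheses restricted to $S_u$.

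First, I would establish a \emph{weak disambiguation} primitive: for any partial concept $\partialh{h}$ and any probability distribution over its non-$\star$ positions in $S_u$, there is a total hypothesis $h'$ that agrees with $\partialh{h}$ on at least a $(1/2 + \Omega(1))$-weighted fraction, and moreover $h'$ can be encoded by a subset of $O(\vc_\C(\H))$ points of $S_u$. This is the sample-compression consequence of the partial Sauer-Shelah bound for classes of VC dimension $\vc_\C(\H)$, together with the standard double-sampling/ghost-sample argument: since the partial pattern class on any sample has size at most $|S_u|^{O(\vc_\C(\H))}$, a counting argument produces a weak extender with the desired compression size.

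Next, I would apply boosting (for instance, AdaBoost with a majority-of-weak-extensions committee). Fix a target $\partialh{h}$ and initialize a uniform distribution over its non-$\star$ positions. At each of $T = O(\log|S_u|)$ rounds, invoke the weak disambiguator on the currently reweighted non-$\star$ positions, then multiplicatively up-weight the positions where the weak hypothesis errs. The majority vote of the $T$ weak extensions has weighted training error below $1/|S_u|$; since error on the non-$\star$ positions must be an integer multiple of $1/|S_u|$, the majority vote in fact agrees with $\partialh{h}$ on \emph{every} non-$\star$ position. Define $\tilde{\H}$ as the class of majority votes over ordered $T$-tuples of weak disambiguators. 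Counting then gives $|\tilde{\H}| \leq (|S_u|^{O(\vc_\C(\H))})^{O(\log|S_u|)} = |S_u|^{O(\vc_\C(\H)\log|S_u|)}$, as required.

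The main obstacle is establishing the weak-disambiguation primitive for \emph{partial} classes rather than total ones, and ensuring uniformity: the family of weak extenders must be a fixed collection indexed by $O(\vc_\C(\H))$-subsets of $S_u$ that does not depend on the target $\partialh{h}$, so that the boosting committee varies over a single common class. Here the partial Sauer-Shelah bound $|\partialh{\H}|_{S_u}| \leq |S_u|^{O(\vc_\C(\H))}$ is essential, as is a careful verification that the majority-vote construction preserves extension of $\partialh{h}$ on the committed coordinates (a monotone property once each weak disambiguator is correct there on a majority of rounds). The rest of the argument is routine counting of the induced committees.
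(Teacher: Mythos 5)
First, note that the paper does not prove this lemma at all: it is imported verbatim as Theorem 12 of \cite{alon2022theory}, so you are reconstructing the proof of the cited result rather than matching an argument in this paper. Your overall architecture — a weak extender determined by $O(\vc_\C(\H))$ labeled points of $S_u$, boosted for $T=O(\log|S_u|)$ rounds so that the majority vote agrees with $\partialh{h}$ on every non-$\star$ position, followed by counting committees to get $|S_u|^{O(\vc_\C(\H)\log|S_u|)}$ — is indeed the structure of the actual proof (PAC learnability of partial classes $\Rightarrow$ compression of size $O(d\log m)$ via boosting $\Rightarrow$ disambiguation by counting reconstructions), so the plan is sound at that level.

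The genuine gap is your justification of the weak-disambiguation primitive. You derive it from ``the partial Sauer--Shelah bound'' plus a double-sampling/ERM counting argument, but neither piece works for partial concept classes. The number of distinct $\{0,1,\star\}$-patterns of a partial class on $|S_u|$ points is \emph{not} bounded by $|S_u|^{O(\vc_\C(\H))}$ (e.g.\ a class of concepts taking values only in $\{0,\star\}$ has VC dimension $0$ but can realize $2^{|S_u|}$ patterns), and uniform convergence/ERM arguments do not apply: a partial concept consistent with the sample may be $\star$ on most of the support of the reweighted distribution, so it does not yield a \emph{total} hypothesis with quantifiable weighted agreement, and indeed the failure of uniform convergence is precisely what makes partial classes delicate in \cite{alon2022theory}. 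The correct primitive, and the one boosted in the cited proof, is the one-inclusion-graph learner for partial classes: given $m=O(\vc_\C(\H))$ labeled points drawn from the current distribution over the non-$\star$ positions, its prediction at any $x\in S_u$ is a fixed function of the compressed labeled sample (uniform over targets, as you require), and its expected error is at most $\vc_\C(\H)/(m+1)<1/2-\Omega(1)$, so some compressed sample yields the desired weak extender. With that substitution your boosting and counting steps go through; without it, the weak extender with compression size $O(\vc_\C(\H))$ has not been established.
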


    To see that $\H''$ is small, consider the corresponding partial version $\partialh{\H''}$. Using Lemma~\ref{lemma:partial}, we can construct a total hypothesis class $\tilde{\H}''$ that has a hypothesis $\tilde{h}$ for every $h\in\H''$ such that $h(x) = \tilde{h}(x)$ for every $x\in S_u$ that satisfies $\ell^{\C,\mathrm{mar}}(h, x)=0$ (i.e., $h$ is robust on $x$). Moreover, we can see that for each $h$, the corresponding $\tilde{h}$ is unique. This is because due to the way $\H''$ is constructed, for any two $h_1,h_2\in\H''$, there exists $x\in S_u$ such that $h_1(x)\neq h_2(x)$ and $\ell^{\C,\mathrm{mar}}(h_1,x)=\ell^{\C,\mathrm{mar}}(h_2,x)=0$. This means the corresponding $\tilde{h_1}$ and $\tilde{h_2}$ must be different and $x$ is the witness. Since there is a unique hypothesis in $\tilde{\H}''$ for every hypothesis in $\H''$, we get $|\H''|\leq |\tilde{\H}''|\leq |S_u|^{O\left(\vc_\C(\H)\log |S_u|\right)}$.

    Next, we show that $\H''$ contains a ``good" hypothesis. Imagine that $S_u$ was generated by first sampling a labeled set of size $|S_u|$ from distribution $P$ and then removing the labels. Let $y = (y_1,\ldots, y_m)$ be the labels that were removed. Then, at some point $y$ will be considered by the first for loop (Line~\ref{line:first-for-loop}) in Algorithm~\ref{alg:semi-supervised-agnostic} and a corresponding hypothesis $h'$ will be added to $\H'$. Since $h'$ is the result of running $\text{RERM}^\C$, using Lemma~\ref{prop:vc_finite_loss_class}, we know that with high probability over the randomness of $S_u$, we have $\rLo{\C}{P}(h')\leq\rLo{\C}{P}(\H)+\epsilon$. Thus $\H'$ does contain a good hypothesis. But the risk is that it might get pruned out in the pruning step. 

    We can show that because of the way we do the pruning, there will be another hypothesis $h''\in\H''$ that is not much worse than $h'$ on $S_u$. But we want a guarantee wrt $P$, and thus we need a uniform convergence result that links the closeness of two hypotheses on $S_u$ with their closeness on $P$.

    We use the following lemmas.
    \begin{definition}
        For any two hypotheses $h_1,h_2\in\H$ and $x\in X$, define:
        \[
        \ell^{\C,\text{par}}(h_1,h_2,x)=\indct{\partialh{h_1}(x)\neq\partialh{h_2}(x)}.
        \]
        Also, define $\rLo{\C,\text{par}}{S_u}(h_1,h_2)$ and $\rLo{\C,\text{par}}{P}(h_1,h_2)$ appropriately. 
    \end{definition}
    \begin{lemma}
        \label{lemma:uc-pair}
    For $S_u \geq O\left(\frac{\vc(\H)d(\log(1+1/\gamma)+\log d)+\log 1/\delta}{\epsilon^2}\right)$, with probability at least $1-\delta$ over the randomness of $S_u$, we have that $|\rLo{\C,\text{par}}{S_u}(h_1,h_2)-\rLo{\C,\text{par}}{P}(h_1,h_2)|\leq \epsilon$
    \end{lemma}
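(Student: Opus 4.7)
The plan is to prove this as a direct application of standard VC uniform convergence to the \emph{pair loss class}
\[
\G = \{\, x \mapsto \ell^{\C,\text{par}}(h_1,h_2,x) \;:\; h_1,h_2 \in \H\,\},
\]
viewed as a collection of $\{0,1\}$-valued functions on $X$. Note that the statement is implicitly uniform over pairs $(h_1, h_2) \in \H\times\H$; otherwise a single Hoeffding bound would suffice and no $\vc$-dependence would appear. Once we bound $\vc(\G) = O(\vc(\H)\, d\,(\log(1+1/\gamma)+\log d))$, the conclusion is immediate from the classical agnostic uniform convergence result (Appendix \ref{appsec:vctheory}) applied to $\G$, which demands $|S_u| = \tilde{O}((\vc(\G)+\log(1/\delta))/\epsilon^2)$ samples.

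The central combinatorial step is to bound, for any finite $T=\{x_1,\dots,x_m\}\subseteq X$, the number of distinct labelings of $T$ realized by functions in $\G$. The key observation is that $\partialh{h}(x_i)\in\{0,1,\star\}$ is determined entirely by the restriction $h|_{\C(x_i)}$, since it records only whether $h$ is constantly $0$, constantly $1$, or non-constant on $\C(x_i)$. Hence the partial labeling $(\partialh{h}(x_1),\dots,\partialh{h}(x_m))$ factors through $h|_{T^*}$, where $T^* = \bigcup_{i=1}^m \C(x_i)$ has size at most $mk$ with $k=\max_x|\C(x)|$. By the Sauer--Shelah lemma, $|\{h|_{T^*}:h\in\H\}|\leq (emk/\vc(\H))^{\vc(\H)}$, so the number of distinct partial labelings of $T$ by $\partialh{\H}$ is bounded by the same quantity, and the number of distinct $\ell^{\C,\text{par}}$-labelings of $T$ realized by pairs is at most its square, $(emk/\vc(\H))^{2\vc(\H)}$. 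For $\G$ to shatter $T$ we need $2^m\leq (emk/\vc(\H))^{2\vc(\H)}$, which yields $m=O(\vc(\H)\log k)$. Substituting the cover-size bound $k=O((1+1/\gamma)^d d^d)$ from Section \ref{ss:global_discret} yields the stated $\vc(\G)$.

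The main subtlety to handle carefully is the ternary nature of $\partialh{h}$: the class $\partialh{\H}$ is not itself a class of binary functions, so classical Sauer-style counting does not apply to it directly. The observation that each partial labeling factors through a binary restriction $h|_{T^*}$ on a finite blown-up set sidesteps this issue cleanly, reducing the counting problem to a standard VC-dimension bound on $\H$. Everything thereafter is a routine substitution of the cover size into the sample-complexity expression and an appeal to the textbook uniform convergence theorem.
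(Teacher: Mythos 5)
Your proposal is correct, and at the top level it follows the same skeleton as the paper's proof: bound the combinatorial complexity of the pair class $\{x\mapsto \ell^{\C,\text{par}}(h_1,h_2,x) : h_1,h_2\in\H\}$ by (roughly) $m^{O(\vc(\H)\log k)}$ patterns on $m$ points, substitute $k = O\bigl((1+1/\gamma)^d d^d\bigr)$, and invoke standard uniform convergence for the agnostic rate $\tilde{O}\bigl((\vc(\H)\log k + \log(1/\delta))/\epsilon^2\bigr)$; you also correctly read the statement as uniform over pairs, which is what the downstream use of the lemma (with data-dependent $h',h''$) requires. Where you differ is in how the pattern count is obtained. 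The paper decomposes each per-point value into the quadruple $\bigl(h_1(x), h_2(x), \ell^{\C,\mathrm{mar}}(h_1,x), \ell^{\C,\mathrm{mar}}(h_2,x)\bigr)$ and multiplies the pattern counts of the component classes, implicitly relying on a VC bound for the margin-loss class in the spirit of Lemma \ref{prop:vc_finite_loss_class}. You instead observe that the ternary label $\partialh{h}(x_i)$ factors through the binary restriction $h|_{\C(x_i)}$, blow the sample up to $T^*=\bigcup_i \C(x_i)$ of size at most $mk$, and apply Sauer--Shelah once to $\H$ on $T^*$, squaring for pairs. Your route is somewhat more self-contained: it re-derives the essence of the finite-perturbation-set bound directly rather than citing component loss classes, and it handles the ternary values of $\partialh{h}$ without any ambiguity about the role of $h(x)$ itself (note $x$ need not lie in $\C(x)$, so the constant value on $\C(x)$ is not recoverable from $h(x)$ alone -- your factoring sidesteps this). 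Both arguments yield the same growth bound and hence the same sample complexity.
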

    \begin{proof}
Let $\max_{x\in X} |\C(x)|\leq k$ and consider the hypothesis class $\H\times\H$ consisting of pairs of hypotheses from $\H$. For any $h_1,h_2\in\H$ define the function $(h_1,h_2): X\rightarrow Y$ as $(h_1,h_2)(x) = \ell^{\C,\text{par}}(h_1,h_2,x)$. We show that $\vc(\H\times\H)\leq O(\vc(\H)\log k)$. To see this, consider a sequence $S=(x_1,\ldots,x_m)$ of $m$ points from $X$ and count the number of patterns induced on it by members of $\H\times\H$. It is easy to see that for any $x\in X$, the value of $(h_1,h_2)(x)$ is uniquely determined once we specify the values of $h_1(x), h_2(x), \ell^{\C,\mathrm{mar}}(h_1,x)$, and $\ell^{\C,\mathrm{mar}}(h_2,x)$. Thus the total number of patterns is at most the product of the number of patterns induced by each and thus can be bounded by $m^{O(\vc(\H)\log k)})$. Standard uniform convergence results for VC classes concludes the proof.
\end{proof}

\begin{lemma}
\label{lemma:par-vs-robust}
    For any two hypotheses $h_1, h_2$, $|\rLo{\C}{P}(h_1)-\rLo{\C}{P}(h_2)|\leq\rLo{\C,\text{par}}{P}(h_1, h_2)$.
\end{lemma}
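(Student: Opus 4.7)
The plan is to establish the inequality pointwise and then take expectations. Concretely, I aim to show that for every $(x,y)\in X\times Y$,
\[
\bigl|\rlo{\C}(h_1,x,y) - \rlo{\C}(h_2,x,y)\bigr| \;\leq\; \ell^{\C,\text{par}}(h_1,h_2,x).
\]
Once this pointwise bound is in hand, I just integrate over $(x,y)\sim P$ and apply the triangle inequality for expectations ($|\bE[A-B]|\leq \bE|A-B|$). Crucially, the right-hand side depends only on $x$, so the resulting expectation is exactly $\rLo{\C,\text{par}}{P}(h_1,h_2)$, giving the claim.

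To prove the pointwise bound, I would split on whether $\partialh{h_1}(x)=\partialh{h_2}(x)$. If the partial labels disagree, then $\ell^{\C,\text{par}}(h_1,h_2,x)=1$, and the inequality is automatic because both robust losses live in $\{0,1\}$. The content of the lemma is the other direction: when the partial labels agree, the robust losses are actually \emph{equal}. I would argue this by examining the three agreement cases. If $\partialh{h_1}(x)=\partialh{h_2}(x)=0$, then by definition $h_1(z)=h_2(z)=0$ for every $z\in\C(x)$, so $\rlo{\C}(h_i,x,y)=\indct{y\neq 0}$ for both $i$. The case of common partial label $1$ is symmetric.

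The only slightly delicate subcase is $\partialh{h_1}(x)=\partialh{h_2}(x)=\star$: by the definition of $\partialh{\cdot}$, each $h_i$ takes both values $0$ and $1$ somewhere in $\C(x)$, so no matter what $y\in\{0,1\}$ is, there exists some $z\in\C(x)$ with $h_i(z)\neq y$, forcing $\rlo{\C}(h_i,x,y)=1$ for both $i$. Thus in all three subcases the two robust losses coincide and the difference is $0\leq \ell^{\C,\text{par}}(h_1,h_2,x)$. I do not anticipate any real obstacle; the only care point is remembering that the $\star$ label is precisely what guarantees an adversarial witness for every label $y$, which is what makes the equality hold rather than merely an inequality.
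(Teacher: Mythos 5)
Your proposal is correct and follows essentially the same route as the paper: the paper's (one-line) proof observes that $\ell^{\C,\text{par}}(h_1,h_2,x)=0$ forces $\rlo{\C}(h_1,x,y)=\rlo{\C}(h_2,x,y)$ for all $y$, and then takes expectations, which is exactly your pointwise bound plus integration. Your case analysis (including the $\star$ case giving an adversarial witness for every label) simply spells out the details the paper leaves implicit.
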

\begin{proof}
    In fact, it is easy to see that for any $x$, if $\ell^{\C,\text{par}}(h_1,h_2,x)=0$, then for all $y\in\{0,1\}$, $\ell^\C(h_1,x,y)=\ell^\C(h_2,x,y)$. Thus the lemma follows. 
\end{proof}

Now, we are ready to complete the proof. If $h'\in\H''$, then $\H''$ contains a good hypothesis. Otherwise, if $h'$ gets pruned out, that can only be because there was another hypothesis $h''$ such that $|S^{h''}_u|\geq |S^{h'}_u|$ and $h'(x) = h''(x)$ for all $x\in S^{h'}_u\cap S^{h''}_u$. We have:
\begin{align}
    \rLo{\C,\text{par}}{S_u}(h',h'') & \leq \frac{(1-|S_u^{h'}|) + (1-|S_u^{h''}|)}{|S_u|} \notag \\
    & \leq 2\cdot\frac{1-|S_u^{h'}|}{|S_u|} \notag\\
    & = 2\cdot\rLo{\C,\mathrm{mar}}{S_u}(h') \notag\\
    & \leq 2\cdot\rLo{\C,\mathrm{mar}}{P}(h') + 2\epsilon \notag\\
    & \leq 2\cdot\rLo{\C}{P}(h') + 2\epsilon \notag\\
    & = 2\cdot\rLo{\C}{P}(\H) + 2\epsilon\label{eqn:ref}
\end{align}

Finally, when we run $\text{RERM}^\C$ on $S_l$ with respect to $\H''$, we get $\hat{h}$, that satisfies:
\begin{align}
    \rLo{\C}{P}(\hat{h}) & \leq \rLo{\C}{P}(\H'') + \epsilon\notag \\ 
    & \leq \rLo{\C}{P}(h'') + \epsilon \notag\\
    & \leq \rLo{\C}{P}(h') + \rLo{\C,\text{par}}{P}(h'',h') + 2\epsilon \label{eqn:lemma}\\
    & \leq \rLo{\C}{P}(\H) + \rLo{\C,\text{par}}{S_u}(h'',h') + 3\epsilon \label{eqn:uc}\\
    & \leq 3\cdot\rLo{\C}{P}(\H) + 5\epsilon\label{eqn:four}
\end{align}
Here,~\eqref{eqn:lemma} follows from Lemma~\ref{lemma:par-vs-robust}, ~\eqref{eqn:uc} follows from Lemma~\ref{lemma:uc-pair} and~\eqref{eqn:four} follows from ~\eqref{eqn:ref}. Thus overall, $\rLo{\C}{P}(\hat{h})\leq \epsilon$ and this implies $\rLo{\U}{P}(\nn_C(\hat{h}))\leq \epsilon$ (as in proof of Theorem \ref{thm:supervised-global}, Equation \ref{eqn:loss_relations}).
\end{proof}
\end{document}